\documentclass[dvips,aos,preprint]{imsart}

\RequirePackage[OT1]{fontenc}
\RequirePackage{amsthm,amsmath,natbib}
\RequirePackage[colorlinks,citecolor=blue,urlcolor=blue]{hyperref}
\RequirePackage{hypernat}
\RequirePackage{amssymb}


\startlocaldefs
\numberwithin{equation}{section}
\theoremstyle{plain}

\newtheorem{lemma}{Lemma}[section]
\newtheorem{remark}{Remark}[section]
\newtheorem{theorem}{Theorem}[section]
\newtheorem{corollary}{Corollary}[section]
\endlocaldefs

\begin{document}

\def\ci{\!\perp\!}
\def\nci{\!\not\perp\!}

\begin{frontmatter}
\title{Faithfulness in Chain Graphs: The Gaussian Case}
\runtitle{Faithfulness in Chain Graphs: The Gaussian Case}

\begin{aug}
\author{\fnms{Jose M.} \snm{Pe\~{n}a}\ead[label=e1]{jospe@ida.liu.se}}

\runauthor{J. M. Pe\~{n}a}

\affiliation{ADIT, Department of Computer and Information Science\\
        Link\"oping University, SE-58183 Link\"{o}ping, Sweden\\
\printead{e1}\\
\phantom{E-mail: jospe@ida.liu.se}}

\end{aug}

\begin{abstract}
This paper deals with chain graphs under the classic Lauritzen-Wermuth-Frydenberg interpretation. We prove that the regular Gaussian distributions that factorize with respect to a chain graph $G$ with $d$ parameters have positive Lebesgue measure with respect to $\mathbb{R}^d$, whereas those that factorize with respect to $G$ but are not faithful to it have zero Lebesgue measure with respect to $\mathbb{R}^d$. This means that, in the measure-theoretic sense described, almost all the regular Gaussian distributions that factorize with respect to $G$ are faithful to it.
\end{abstract}

\begin{keyword}
\kwd{chain graphs}
\kwd{faithfulness}
\kwd{Markov equivalence}
\kwd{largest chain graph}
\end{keyword}

\begin{keyword}[class=AMS]
\kwd{62H05, 60E05, 68T30}
\end{keyword}

\end{frontmatter}

\section{Introduction}

This paper deals with chain graphs under the classic Lauritzen-Wermuth-Frydenberg interpretation. We prove that the regular Gaussian distributions that factorize with respect to a chain graph $G$ with $d$ parameters have positive Lebesgue measure with respect to $\mathbb{R}^d$, whereas those that factorize with respect to $G$ but are not faithful to it have zero Lebesgue measure with respect to $\mathbb{R}^d$. This means that, in the measure-theoretic sense described, almost all the regular Gaussian distributions that factorize with respect to $G$ are faithful to it. Previously, it has been proven that for any undirected graph there exists a regular Gaussian distribution that is faithful to it \citep[Corollary 3]{LnenickaandMatus2007}. A stronger result has been proven for acyclic directed graphs: In certain measure-theoretic sense, almost all the regular Gaussian distributions that factorize with respect to an acyclic directed graph are faithful to it \citep[Theorem 3.2]{Spirtesetal.1993}. Therefore, this paper extends the latter result to chain graphs. It is worth mentioning that we have recently proved in \citep{Penna2009} a result analogous to the one in this paper but for strictly positive discrete probability distributions with arbitrary prescribed sample space. It is also worth noticing that a result analogous to the one in this paper has been proven in \citep[Theorem 6.1]{Levitzetal.2001} under the alternative Andersson-Madigan-Perlman interpretation of chain graphs.

There are two important implications of the result proven in this paper:
\begin{itemize}
\item The use of chain graphs to represent independence models in artificial intelligence and statistics has increased over the years, particularly in the case of undirected graphs and acyclic directed graphs.\footnote{In this paper, we do not consider graphs with multiple edges between two nodes.} However, there are independence models that can be represented exactly by chain graphs but that cannot be represented exactly by undirected graphs or acyclic directed graphs. As a matter of fact, the experimental results in \citep{Penna2007} suggest that this may be the case for the vast majority of independence models that can be represented exactly by chain graphs. In other words, for most chain graphs, every undirected graph and acyclic directed graph either represents some separation statement that is not represented by the chain graph or does not represent some separation statement that is represented by the chain graph. As \citet[Section 1.1]{Studeny2005} points out, something that would confirm that this is an advantage of chain graphs for modeling regular Gaussian distributions would be proving that any independence model represented by a chain graph can be represented by a regular Gaussian distribution. The result in this paper confirms this point.
\item In the literature, there exist two graphical criteria for identifying independencies holding in a probability distribution $p$ that factorizes with respect to a chain graph $G$: The moralization criterion \citep{Lauritzen1996} and the c-separation criterion \citep{Studeny1998}. Both criteria are known to be equivalent \citep[Lemma 5.1]{Studeny1998}. Furthermore, both criteria are known to be sound, i.e. they only identify independencies in $p$ \citep[Theorems 3.34 and 3.36]{Lauritzen1996}. The result in this paper implies that both criteria are also complete for regular Gaussian distributions: If $p$ is a regular Gaussian distribution, then both criteria identify all the independencies in $p$ that can be identified on the sole basis of $G$, because there exists a regular Gaussian distribution that is faithful to $G$.
\end{itemize}

The rest of the paper is organized as follows. We start by reviewing some concepts in Section \ref{sec:preliminaries}. In Section \ref{sec:cg}, we describe how we parameterize the regular Gaussian distributions that factorize with respect to a chain graph. We present our results on faithfulness in Section \ref{sec:faithfulness}. In Section \ref{sec:equivalence}, we present some results about chain graph equivalence that follow from the results in Section \ref{sec:faithfulness}. Finally, we close with some discussion in Section \ref{sec:conclusions}.

\section{Preliminaries}\label{sec:preliminaries}

In this section, we define some concepts used later in this paper. We first recall some definitions from probabilistic graphical models. See, for instance, \citep{Lauritzen1996} and \citep{Studeny2005} for further information. Let $V=\{1, \ldots, N\}$ be a finite set of size $N$. The elements of $V$ are not distinguished from singletons and the union of the sets $I_1, \ldots, I_l \subseteq V$ is written as the juxtaposition $I_1 \ldots I_l$. We denote by $|I|$ the size or cardinality of a set $I \subseteq V$, e.g. $|V|=N$. We assume throughout the paper that the union of sets precedes the set difference when evaluating an expression. Unless otherwise stated, all the graphs in this paper are defined over $V$.

If a graph $G$ contains an undirected (resp. directed) edge between two nodes $v_{1}$ and $v_{2}$, then we write that $v_{1} - v_{2}$ (resp. $v_{1} \rightarrow v_{2}$) is in $G$. If $v_{1} \rightarrow v_{2}$ is in $G$ then $v_{1}$ is called a parent of $v_{2}$. Let $Pa_G(I)$ denote the set of parents in $G$ of the nodes in $I \subseteq V$. When $G$ is evident from the context, we drop the $G$ from $Pa_G(I)$ and use $Pa(I)$ instead. A route from a node $v_{1}$ to a node $v_{l}$ in a graph $G$ is a sequence of nodes $v_{1}, \ldots, v_{l}$ such that there exists an edge in $G$ between $v_{i}$ and $v_{i+1}$ for all $1 \leq i < l$. The length of a route is the number of (not necessarily distinct) edges in the route, e.g. the length of the route $v_{1}, \ldots, v_{l}$ is $l-1$. We treat all singletons as routes of length zero. A path is a route in which the nodes $v_{1}, \ldots, v_{l}$ are distinct. A route is called undirected if $v_{i} - v_{i+1}$ is in $G$ for all $1 \leq i < l$. A route is called descending if $v_{i} - v_{i+1}$ or $v_{i} \rightarrow v_{i+1}$ is in $G$ for all $1 \leq i < l$. If there is a descending route from $v_{1}$ to $v_{l}$ in $G$, then $v_{1}$ is called an ancestor of $v_{l}$ and $v_{l}$ is called a descendant of $v_{1}$. Let $An_G(I)$ denote the set of ancestors in $G$ of the nodes in $I \subseteq V$. A descending route $v_{1}, \ldots, v_{l}$ is called a directed pseudocycle if $v_{i} \rightarrow v_{i+1}$ is in $G$ for some $1 \leq i < l$, and $v_{l}=v_{1}$. A chain graph (CG) is a graph (possibly) containing both undirected and directed edges and no directed pseudocycles. An undirected graph (UG) is a CG containing only undirected edges. The underlying UG of a CG is the UG resulting from replacing the directed edges in the CG by undirected edges. A set of nodes of a CG is connected if there exists an undirected route in the CG between every pair of nodes in the set. A connectivity component of a CG is a connected set that is maximal with respect to set inclusion. Hereinafter, we assume that the connectivity components $B_1, \ldots, B_n$ of a CG $G$ are well-ordered, i.e. if $v_{1} \rightarrow v_{2}$ is in $G$ then $v_{1} \in B_i$ and $v_{2} \in B_j$ for some $1 \leq i < j \leq n$. The moral graph of a CG $G$, denoted $G^m$, is the undirected graph where two nodes are adjacent iff they are adjacent in $G$ or they are both in $Pa(B_i)$ for some connectivity component $B_i$ of $G$. The subgraph of $G$ induced by $I \subseteq V$, denoted $G_I$, is the graph over $I$ where two nodes are connected by a (un)directed edge if that edge is in $G$. A path $v_{1}, \ldots, v_{l}$ in $G$ is called a complex if the subgraph of $G$ induced by the set of nodes in the path looks like $v_{1} \rightarrow v_{2} - \ldots - v_{l-1} \leftarrow v_{l}$. The path $v_{2}, \ldots, v_{l-1}$ is called the region of the complex. A section of a route $\rho$ in a CG is a maximal subroute of $\rho$ that only contains undirected edges. A section $v_{2} - \ldots - v_{l-1}$ of $\rho$ is a collider section of $\rho$ if $v_{1} \rightarrow v_{2} - \ldots - v_{l-1} \leftarrow v_{l}$ is a subroute of $\rho$. Furthermore, a route $\rho$ in a CG is said to be superactive with respect to $K \subseteq V$ when
\begin{itemize}
\item every collider section of $\rho$ has some node in $K$, and
\item every other section of $\rho$ has no node in $K$.
\end{itemize}
A set $I \subseteq V$ is complete in an UG $G$ if there is an undirected edge in $G$ between every pair of distinct nodes in $I$. We denote the set of complete sets in $G$ by $\mathcal{C}(G)$. We treat all singletons as complete sets and, thus, they are included in $\mathcal{C}(G)$.

Let $X=(X_i)_{i \in V}$ denote a column vector of random variables and $X_I$ $(I \subseteq V)$ its subvector $(X_i)_{i \in I}$. We use upper-case letters to denote random variables and the same letters in lower-case to denote their states. Unless otherwise stated, all the probability distributions in this paper are defined on (state space) $\mathbb{R}^N$. Let $I$, $J$ and $K$ denote three disjoint subsets of $V$. We denote by $I \ci_p J | K$ that $X_I$ is independent of $X_J$ given $X_K$ in a probability distribution $p$. Likewise, we denote by $I \ci_G J | K$ that $I$ is separated from $J$ given $K$ in a CG $G$. Specifically, $I \ci_G J | K$ holds when there is no route in $G$ from a node in $I$ to a node in $J$ that is superactive with respect to $K$. This is equivalent to say that $I \ci_G J | K$ holds when every path in $(G_{An_G(IJK)})^m$ from a node in $I$ to a node in $J$ has some node in $K$. The independence model represented by a CG $G$ is the set of separation statements $I \ci_G J | K$. We say that a probability distribution $p$ is Markovian with respect to a CG $G$ when $I \ci_p J | K$ if $I \ci_G J | K$ for all $I$, $J$ and $K$ disjoint subsets of $V$. We say that $p$ is faithful to $G$ when $I \ci_p J | K$ iff $I \ci_G J | K$ for all $I$, $J$ and $K$ disjoint subsets of $V$. We denote by $I \nci_p J | K$ and $I \nci_G J | K$ that $I \ci_p J | K$ and $I \ci_G J | K$ do not hold, respectively.

We now recall some results from matrix theory. See, for instance, \citep{HornandJohnson1985} for more information. Let $A=(A_{i,j})_{i, j \in V}$ denote a square matrix. Let $A_{I,J}$ with $I, J \subseteq V$ denote its submatrix $(A_{i,j})_{i \in I, j \in J}$. The determinant of $A$ can recursively be computed, for fixed $i \in V$, as $det(A)=\sum_{j \in V} (-1)^{i+j} A_{i,j} det(A_{\setminus (ij)})$, where $A_{\setminus (ij)}$ denotes the matrix produced by removing the row $i$ and column $j$ from $A$. If $det(A) \neq 0$ then the inverse of $A$ can be computed as $(A^{-1})_{i,j}=(-1)^{i+j} det(A_{\setminus (ji)})/det(A)$ for all $i, j \in V$. We say that $A$ is strictly diagonally dominant if $abs(A_{i,i}) > \sum_{\{j \in V \: : \: j \neq i\}} abs(A_{i,j})$ for all $i \in V$, where $abs()$ denotes absolute value. A matrix $A$ is Hermitian if it is equal to the matrix resulting from, first, transposing $A$ and, then, replacing each entry by its complex conjugate. Clearly, a real symmetric matrix is Hermitian. A real symmetric $N \times N$ matrix $A$ is positive definite if $y^T A y >0$ for all non-zero $y \in \mathbb{R}^N$.

\begin{remark}\label{rem:inverse}
Note that $det(A)$ is a real polynomial in the entries of $A$, and that $(A^{-1})_{i,j}$ is then the restriction of a fraction of two real polynomials in the entries of $A$ to the area where $det(A)$ is non-zero.
\end{remark}

Finally, we recall some results about Gaussian distributions. We represent a Gaussian distribution as $\mathcal{N}(\mu, \Sigma)$ where $\mu$ is its mean vector and $\Sigma$ its covariance matrix. We say that a Gaussian distribution $\mathcal{N}(\mu, \Sigma)$ is regular if $\Sigma$ is positive definite or, equivalently, invertible. In this paper, we often find more convenient to work with the inverse of the covariance matrix $\Omega=\Sigma^{-1}$, which is also known as the concentration matrix or precision matrix. Since $\Sigma=\Omega^{-1}$, we thus often write $\mathcal{N}(\mu, \Omega^{-1})$ instead of $\mathcal{N}(\mu, \Sigma)$. Let $I$, $J$, $K$ and $L$ denote four disjoint subsets of $V$. Any regular Gaussian distribution $p$ satisfies, among others, the following properties:
\begin{itemize}
\item Symmetry $I \ci_p J | K \Rightarrow J \ci_p I | K$.
\item Decomposition $I \ci_p J L | K \Rightarrow I \ci_p J | K$.
\item Intersection $I \ci_p J | K L \land I \ci_p L | K J \Rightarrow I \ci_p J L | K$.
\item Weak transitivity $I \ci_p J | K \land I \ci_p J | K u \Rightarrow I \ci_p u | K \lor u \ci_p J | K$ with $u \in V \setminus I J K$.
\end{itemize}

The following results have been proven in \citep[Sections 2.3.1, 2.3.3]{Bishop2006}. For the sake of completeness, Appendix A adapts the proofs to the notation used in this paper. Let $I$ and $J$ denote two disjoint subsets of $V$. Let $p(x_{IJ}) = \mathcal{N}(\mu, \Omega^{-1})$ where $\Omega$ is positive definite. Then, as shown in \citep[Section 2.3.1]{Bishop2006} and in Appendix A, $p(x_J | x_I) = \mathcal{N}(\delta x_I + \gamma, \epsilon^{-1})$ where $\delta$, $\gamma$ and $\epsilon$ are the following real matrices of dimensions, respectively, $|J| \times |I|$, $|J| \times 1$ and $|J| \times |J|$:
\begin{equation}\label{eq:bishop1a}
\delta=-(\Omega_{J,J})^{-1} \Omega_{J,I},
\end{equation}
\begin{equation}\label{eq:bishop1b}
\gamma=\mu_J+(\Omega_{J,J})^{-1} \Omega_{J,I} \mu_I
\end{equation}
and
\begin{equation}\label{eq:bishop1c}
\epsilon=\Omega_{J,J}.
\end{equation}
Let $p(x_I) = \mathcal{N}(\alpha, \beta^{-1})$ and $q(x_J | x_I) = \mathcal{N}(\delta x_I + \gamma, \epsilon^{-1})$ where $\delta$, $\gamma$ and $\epsilon$ are real matrices of dimensions, respectively, $|J| \times |I|$, $|J| \times 1$ and $|J| \times |J|$, and $\beta$ and $\epsilon$ are positive definite. Then, as shown in \citep[Section 2.3.3]{Bishop2006} and in Appendix A, $p(x_I) q(x_J | x_I)$ is a Gaussian distribution $\mathcal{N}(\lambda, \Lambda^{-1})$ over $\left(
\begin{array}{c}
    x_I\\
    x_J\\
  \end{array}
\right)$
where
\begin{equation}\label{eq:bishop2a}
\lambda=\left(
  \begin{array}{c}
    \alpha \\
    \delta \alpha + \gamma \\
  \end{array}
\right)
\end{equation}
and
\begin{equation}\label{eq:bishop2b}
\Lambda = \left(
  \begin{array}{cc}
    \beta + \delta^T \epsilon \delta & - \delta^T \epsilon \\
    -\epsilon \delta & \epsilon \\
  \end{array}
\right).
\end{equation}
Moreover, $p(x_I) q(x_J | x_I)$ is regular because
\begin{equation}\label{eq:bishop2c}
\Lambda^{-1} = \left(
  \begin{array}{cc}
    \beta^{-1} & \beta^{-1} \delta^T \\
    \delta \beta^{-1} & \epsilon^{-1} + \delta \beta^{-1} \delta^T \\
  \end{array}
\right).
\end{equation}

\section{Parameterization of chain graphs}\label{sec:cg}

In this section, we describe how we parameterize the regular Gaussian distributions that factorize with respect to a CG. This is a key issue because our results about faithfulness are not only relative to the CG at hand and the measure considered, the Lebesgue measure, but also to the number of parameters of the regular Gaussian distributions that factorize with respect to the CG at hand.

We say that a regular Gaussian distribution $p$ factorizes with respect to a CG $G$ with connectivity components $B_1, \ldots, B_n$ if the following two conditions are met \citep[Proposition 3.30]{Lauritzen1996}:
\begin{itemize}
\item[F1.] $p(x) = \prod_{i=1}^n p(x_{B_i} | x_{Pa(B_i)})$ where
\item[F2.] $p(x_{B_i Pa(B_i)}) = \prod_{C \in \mathcal{C}((G_{B_i Pa(B_i)})^m)} \psi^i_C(x_C)$ where each $\psi^i_C(x_C)$ is a non-negative real function.
\end{itemize}
Let $\mathcal{N}(G)$ denote the set of regular Gaussian distributions that factorize with respect to $G$. We parameterize each probability distribution $p \in \mathcal{N}(G)$ with the following parameters:
\begin{itemize}
\item The mean vector $\mu$ of $p$.
\item The submatrices $\Omega^i_{B_i,B_i}$ and $\Omega^i_{B_i ,Pa(B_i)}$ of the precision matrix $\Omega^i$ of $p(x_{B_i Pa(B_i)})$ for all $1 \leq i \leq n$.
\end{itemize}
We warn the reader that if $\Omega$ denotes the precision matrix of $p$, then $\Omega^i$ is not $\Omega_{B_i Pa(B_i), B_i Pa(B_i)}$ but $((\Omega^{-1})_{B_i Pa(B_i), B_i Pa(B_i)})^{-1}$. It is worth mentioning that an alternative parameterization of the probability distributions in $\mathcal{N}(G)$ is presented in \citep{Wermuth1992}. The main difference between our parameterization and the alternative one is that we parameterize certain concentration matrices whereas they parameterize certain partial concentration matrices. However, both parameterizations are equivalent. We omit the details of the equivalence because they are irrelevant for our purpose. We stick to our parameterization simply because it is more convenient for the calculations performed later in this paper.

Note that the values of some of the parameters in the parameterization introduced above are determined by the values of the rest of the parameters. Specifically, for all $1 \leq i \leq n$, the following constraints apply:
\begin{itemize}
\item[C1.] $(\Omega^i_{B_i,B_i})_{j,k}=(\Omega^i_{B_i,B_i})_{k,j}$ for all $j, k \in B_i$, because $\Omega^i_{j,k}=\Omega^i_{k,j}$ since $\Omega^i$ is symmetric.
\item[C2.] $(\Omega^i_{B_i,B_i})_{j,k}=0$ for all $j, k \in B_i$ such that $j$ and $k$ are not adjacent in $G$. To see it, note that $j$ and $k$ are not adjacent in $(G_{B_i Pa(B_i)})^m$. Consequently, any path between $j$ and $k$ in $(G_{B_i Pa(B_i)})^m$ must pass through some node in $B_i \setminus j k$ or $Pa(B_i)$. Then, $j \ci_{(G_{B_i Pa(B_i)})^m} k | B_i Pa(B_i) \setminus j k$, which implies $j \ci_{p(x_{B_i Pa(B_i)})} k | B_i Pa(B_i) \setminus j k$ because $p(x_{B_i Pa(B_i)})$ is Markovian with respect to $(G_{B_i Pa(B_i)})^m$ due to the condition F2 above \citep[Proposition 3.30, Theorems 3.34 and 3.36]{Lauritzen1996}. The latter independence statement implies $\Omega^i_{j,k}=0$ and, thus, $(\Omega^i_{B_i,B_i})_{j,k}=0$ \citep[Proposition 5.2]{Lauritzen1996}.
\item[C3.] $(\Omega^i_{B_i ,Pa(B_i)})_{j,k}=0$ for all $j \in B_i$ and $k \in Pa(B_i)$ such that $j$ and $k$ are not adjacent in $G$, by a reasoning analogous to the one above.
\end{itemize}
Hereinafter, the parameters whose values are not determined by the constraints above are called non-determined (nd) parameters. However, the values the nd parameters can take are constrained by the fact that these values must correspond to some probability distribution in $\mathcal{N}(G)$. We prove in Lemma \ref{lem:121cg} that this is equivalent to requiring that the nd parameters can only take real values such that $\Omega^i_{B_i,B_i}$ is positive definite for all $1 \leq i \leq n$. That is why the set of nd parameter values satisfying this requirement are hereinafter called the nd parameter space for $\mathcal{N}(G)$. We do not work out the inequalities defining the nd parameter space because these are irrelevant for our purpose. The number of nd parameters is what we call the dimension of $G$, and we denote it as $d$. Specifically, $d=2 |V| + |G|$ where $|G|$ is the number of edges in $G$:
\begin{itemize}
\item $|V|$ due to $\mu$.
\item $|V|$ due to $(\Omega^i_{B_i,B_i})_{j,j}$ for all $1 \leq i \leq n$ and $j \in B_i$.
\item $|G|$ due to the entries below the diagonal of $\Omega^i_{B_i,B_i}$ that are not identically zero and the entries of $\Omega^i_{B_i ,Pa(B_i)}$ that are not identically zero for all $1 \leq i \leq n$. To see this, recall from the constraints C1-C3 above that there is one entry below the diagonal in some $\Omega^i_{B_i,B_i}$ that is not identically zero for each undirected edge in $G$, and one entry in some $\Omega^i_{B_i ,Pa(B_i)}$ that is not identically zero for each directed edge in $G$.
\end{itemize}

\begin{lemma}\label{lem:121cg}
Let $G$ be a CG. There is a one-to-one correspondence between the probability distributions in $\mathcal{N}(G)$ and the elements of the nd parameter space for $\mathcal{N}(G)$.
\end{lemma}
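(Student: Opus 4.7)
The plan is to define maps in both directions and verify they are mutually inverse. In the forward direction, $p \in \mathcal{N}(G)$ is sent to its mean vector $\mu$ together with the submatrices $\Omega^i_{B_i,B_i}$ and $\Omega^i_{B_i,Pa(B_i)}$ of the precision matrix $\Omega^i$ of the marginal $p(x_{B_i Pa(B_i)})$ for each component $B_i$. That the image lies in the nd parameter space is the content of C1--C3 already argued in the text, while positive definiteness of $\Omega^i_{B_i,B_i}$ follows because it is a principal submatrix of $\Omega^i$, itself positive definite since $p(x_{B_i Pa(B_i)})$ is a marginal of the regular Gaussian $p$.

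In the reverse direction I would construct $p$ from nd parameter values component by component using Bishop's formulas. For each $i$, set $\epsilon^i := \Omega^i_{B_i,B_i}$, $\delta^i := -(\Omega^i_{B_i,B_i})^{-1}\Omega^i_{B_i,Pa(B_i)}$, and $\gamma^i := \mu_{B_i} - \delta^i \mu_{Pa(B_i)}$, and let $q_i(x_{B_i}\mid x_{Pa(B_i)}) := \mathcal{N}(\delta^i x_{Pa(B_i)} + \gamma^i, (\epsilon^i)^{-1})$; this is well defined because $\epsilon^i$ is positive definite by hypothesis. Exploiting the well-ordering $Pa(B_i) \subseteq B_1 \cdots B_{i-1}$, I would iteratively apply (\ref{eq:bishop2a})--(\ref{eq:bishop2c}) to assemble a regular Gaussian $p(x) := \prod_{i=1}^n q_i(x_{B_i}\mid x_{Pa(B_i)})$, which immediately satisfies F1. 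For F2, (\ref{eq:bishop2b}) shows that the precision matrix of the marginal $p(x_{B_i Pa(B_i)})$ has $(B_i,B_i)$ block equal to $\epsilon^i$ and $(B_i,Pa(B_i))$ block equal to $-\epsilon^i \delta^i = \Omega^i_{B_i,Pa(B_i)}$; combined with C2--C3 this makes the precision matrix vanish at every non-edge of $(G_{B_i Pa(B_i)})^m$ (any two nodes in $Pa(B_i)$ become adjacent after moralization, since they are both parents of $B_i$ in $G_{B_i Pa(B_i)}$, so no constraint is needed on the $(Pa(B_i),Pa(B_i))$ block), and the Gaussian Hammersley--Clifford direction of \citep[Proposition 5.2]{Lauritzen1996} then yields the required clique factorization.

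The two maps are mutually inverse by inspection. Forward-then-reverse recovers the input $\mu$ and each pair $\Omega^i_{B_i,B_i},\Omega^i_{B_i,Pa(B_i)}$ because the marginal-precision blocks produced by the construction are precisely $\epsilon^i$ and $-\epsilon^i\delta^i$ by (\ref{eq:bishop2b}). Reverse-then-forward recovers $p$ because both $p$ and its reconstruction share the same $\mu$ and the same Gaussian conditionals $p(x_{B_i}\mid x_{Pa(B_i)})$---the latter being determined by $\mu_{B_i Pa(B_i)}$, $\Omega^i_{B_i,B_i}$, $\Omega^i_{B_i,Pa(B_i)}$ via (\ref{eq:bishop1a})--(\ref{eq:bishop1c})---and both factor as $\prod_i p(x_{B_i}\mid x_{Pa(B_i)})$ by F1. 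The main obstacle I expect is the verification of F2 in the reverse direction: one must correctly match the zero pattern of $\Omega^i$ imposed by C2--C3 against the non-edges of the moral graph $(G_{B_i Pa(B_i)})^m$ and then invoke Hammersley--Clifford to translate this zero pattern back into a clique factorization.
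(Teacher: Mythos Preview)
Your approach is essentially the paper's: build the conditionals $q_i$ from the parameters via (\ref{eq:bishop1a})--(\ref{eq:bishop1c}), assemble them into a joint $p$ via (\ref{eq:bishop2a})--(\ref{eq:bishop2c}), and read off the $(B_i,B_i)$ and $(B_i,Pa(B_i))$ precision blocks of the marginal $p(x_{B_iPa(B_i)})$ via (\ref{eq:bishop2b}) to verify F2 and the round trip; the paper organizes this as two separate injectivity arguments rather than a mutual-inverse check, but the content is the same, and your observation that $Pa(B_i)$ is automatically complete in $(G_{B_iPa(B_i)})^m$ is exactly what makes the $(Pa(B_i),Pa(B_i))$ block irrelevant.

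The one step you should not call ``immediate'' is F1: defining $p:=\prod_i q_i(x_{B_i}\mid x_{Pa(B_i)})$ does not by itself give $p(x_{B_i}\mid x_{Pa(B_i)})=q_i(x_{B_i}\mid x_{Pa(B_i)})$, and the paper devotes its Step~2 to a Fubini-type integration argument (Equations (\ref{eq:sum1})--(\ref{eq:equalconditionals})) establishing precisely this identity. You need it not only for F1 but also before you may invoke (\ref{eq:bishop2b}) on the marginal, since that equation applies to a product $p(x_{Pa(B_i)})\,q_i(x_{B_i}\mid x_{Pa(B_i)})$ and you must first know that $p(x_{B_iPa(B_i)})$ factors this way. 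The argument is standard, but it is the one place where real work hides behind your ``immediately''. Minor point: your ``forward-then-reverse'' and ``reverse-then-forward'' labels in the last paragraph are swapped relative to how you defined the two maps.
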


\begin{proof}
We first prove that the mapping of probability distributions into nd parameter values is injective. Obviously, any probability distribution in $p \in \mathcal{N}(G)$ is mapped into some real values of the nd parameters $\mu$, $\Omega^i_{B_i,B_i}$ and $\Omega^i_{B_i ,Pa(B_i)}$ for all $1 \leq i \leq n$. In particular, $\Omega^i_{B_i,B_i}$ takes value $(((\Omega^{-1})_{B_i Pa(B_i), B_i Pa(B_i)})^{-1})_{B_i,B_i}$ where $\Omega$ is the precision matrix of $p$. Then, that $\Omega^i_{B_i,B_i}$ is positive definite follows from the fact that $\Omega$ is positive definite \citep[p. 237]{Studeny2005}. Thus, $p$ is mapped into some element of the nd parameter space for $\mathcal{N}(G)$.

Moreover, different probability distributions are mapped into different elements. To see it, assume to the contrary that there exist two distinct probability distributions $p, p' \in \mathcal{N}(G)$ that are mapped into the same element. Note that this element uniquely identifies $p(x_{B_i} | x_{Pa(B_i)})$ by Equations \ref{eq:bishop1a}-\ref{eq:bishop1c} for all $1 \leq i \leq n$, where $I=Pa(B_i)$ and $J=B_i$. Likewise, it uniquely identifies $p'(x_{B_i} | x_{Pa(B_i)})$ for all $1 \leq i \leq n$. Then, $p(x_{B_i} | x_{Pa(B_i)}) = p'(x_{B_i} | x_{Pa(B_i)})$ for all $1 \leq i \leq n$. However, this contradicts the assumption that $p$ and $p'$ are distinct by the condition F1 above.

We now prove in three steps that the mapping of nd parameter values into probability distributions is injective.

\textbf{Step 1} We first show that any element of the nd parameter space for $\mathcal{N}(G)$ is mapped into some regular Gaussian distribution $q$. Note that any element of the nd parameter space for $\mathcal{N}(G)$ uniquely identifies a Gaussian distribution $q^i(x_{B_i} | x_{Pa(B_i)})$ for all $1 \leq i \leq n$ by Equations \ref{eq:bishop1a}-\ref{eq:bishop1c}, where $I=Pa(B_i)$ and $J=B_i$. Specifically, $q^i(x_{B_i} | x_{Pa(B_i)})= \mathcal{N}(\delta^i x_{Pa(B_i)} + \gamma^i, (\epsilon^i)^{-1})$ where
\begin{equation}\label{eq:delta}
\delta^i= -(\Omega^i_{B_i,B_i})^{-1} \Omega^i_{B_i ,Pa(B_i)},
\end{equation}
\begin{equation}\label{eq:gamma}
\gamma^i= \mu_{B_i} + (\Omega^i_{B_i,B_i})^{-1} \Omega^i_{B_i ,Pa(B_i)} \mu_{Pa(B_i)}
\end{equation}
and
\begin{equation}\label{eq:epsilon}
\epsilon^i= \Omega^i_{B_i,B_i}.
\end{equation}
In the equations above, we have assumed that the values of all the entries of $\Omega^i_{B_i,B_i}$ and $\Omega^i_{B_i ,Pa(B_i)}$ have previously been determined from the element of the nd parameter space at hand and the constraints C1-C3 above. Furthermore, note that $q^i(x_{B_i} | x_{Pa(B_i)})$ is regular because, by definition, $\Omega^i_{B_i,B_i}$ is positive definite. Clearly, $q^i(x_{B_i} | x_{Pa(B_i)})$ can be rewritten as a regular Gaussian distribution $r^i(x_{B_i} | x_{B_1 \ldots B_{i-1}})$: It suffices to take
\[
r^i(x_{B_i} | x_{B_1 \ldots B_{i-1}}) = \mathcal{N}((\delta^i, \textbf{0}) \left(
  \begin{array}{c}
    x_{Pa(B_i)}\\
    x_{B_1 \ldots B_{i-1} \setminus Pa(B_i)}\\
  \end{array}
\right) + \gamma^i, (\epsilon^i)^{-1})
\]
where $\textbf{0}$ is a matrix of zeroes of dimension $|B_i| \times |B_1 \ldots B_{i-1} \setminus Pa(B_i)|$. Then, $r^1(x_{B_1}) r^2(x_{B_2} | x_{B_1})$ is a regular Gaussian distribution by Equations \ref{eq:bishop2a}-\ref{eq:bishop2c}. Likewise, $r^1(x_{B_1}) r^2(x_{B_2} | x_{B_1}) r^3(x_{B_3} | x_{B_1 B_2})$ is a regular Gaussian distribution. Continuing with this process for the rest of connectivity components proves that $\prod_{i=1}^n q^i(x_{B_i} | x_{Pa(B_i)})=\prod_{i=1}^n r^i(x_{B_i} | x_{B_1 \ldots B_{i-1}})$ is mapped into some regular Gaussian distribution $q$.

\textbf{Step 2} We now show that $q \in \mathcal{N}(G)$. Note that for all $1 \leq i < n$ and any fixed value of $x_{B_1 \ldots B_i}$
\[
\int \prod_{l=i+1}^n q^l(x_{B_l} | x_{Pa(B_l)}) d{x_{B_{i+1} \ldots B_n}}
\]
\[
= \int q^{i+1}(x_{B_{i+1}} | x_{Pa(B_{i+1})}) [ \int q^{i+2}(x_{B_{i+2}} | x_{Pa(B_{i+2})}) [ \ldots
\]
\[
\ldots [ \int q^n(x_{B_n} | x_{Pa(B_n)}) d{x_{B_n}}] \ldots ]d{x_{B_{i+2}}}]d{x_{B_{i+1}}} = 1.
\]
Thus, for all $1 \leq i \leq n$, it follows from the equation above that
\[
q(x_{B_i Pa(B_i)}) = \int \prod_{l=1}^n q^l(x_{B_l} | x_{Pa(B_l)}) d{x_{B_1 \ldots B_n \setminus B_i Pa(B_i)}}
\]
\[
= \int [ \prod_{l=1}^i q^l(x_{B_l} | x_{Pa(B_l)}) ] [ \int \prod_{l=i+1}^n q^l(x_{B_l} | x_{Pa(B_l)}) d{x_{B_{i+1} \ldots B_n}}] d{x_{B_1 \ldots B_{i-1} \setminus Pa(B_i)}}
\]
\[
= \int \prod_{l=1}^i q^l(x_{B_l} | x_{Pa(B_l)}) d{x_{B_1 \ldots B_{i-1} \setminus Pa(B_i)}}
\]
\begin{equation}\label{eq:sum1}
= q^i(x_{B_i} | x_{Pa(B_i)}) \int \prod_{l=1}^{i-1} q^l(x_{B_l} | x_{Pa(B_l)}) d{x_{B_1 \ldots B_{i-1} \setminus Pa(B_i)}}.
\end{equation}
Moreover, for all $1 \leq i \leq n$, it follows from the equation above that
\[
q(x_{Pa(B_i)}) = \int q(x_{B_i Pa(B_i)}) d{x_{B_i}}
\]
\[
= \int [ \int \prod_{l=1}^i q^l(x_{B_l} | x_{Pa(B_l)}) d{x_{B_1 \ldots B_{i-1} \setminus Pa(B_i)}} ] d{x_{B_i}}
\]
\begin{equation}\label{eq:fub1}
= \int [ \int \prod_{l=1}^i q^l(x_{B_l} | x_{Pa(B_l)}) d{x_{B_i}} ] d{x_{B_1 \ldots B_{i-1} \setminus Pa(B_i)}}
\end{equation}
\begin{equation}\label{eq:fub2}
= \int [ \prod_{l=1}^{i-1} q^l(x_{B_l} | x_{Pa(B_l)}) \int q^i(x_{B_i} | x_{Pa(B_i)}) d{x_{B_i}} ] d{x_{B_1 \ldots B_{i-1} \setminus Pa(B_i)}}
\end{equation}
\begin{equation}\label{eq:sum2}
= \int \prod_{l=1}^{i-1} q^l(x_{B_l} | x_{Pa(B_l)}) d{x_{B_1 \ldots B_{i-1} \setminus Pa(B_i)}}.
\end{equation}
Note the use of Fubini's theorem to change the order of integration and produce Equation \ref{eq:fub1}. This implies that the inner integral in Equation \ref{eq:fub2} becomes 1. Consequently, for all $1 \leq i \leq n$
\begin{equation}\label{eq:equalconditionals}
q(x_{B_i} | x_{Pa(B_i)}) = \frac{q(x_{B_i Pa(B_i)})}{q(x_{Pa(B_i)})} = q^i(x_{B_i} | x_{Pa(B_i)})
\end{equation}
due to Equations \ref{eq:sum1} and \ref{eq:sum2}. Therefore,
\[
q(x) = \prod_{i=1}^n q^i(x_{B_i} | x_{Pa(B_i)}) = \prod_{i=1}^n q(x_{B_i} | x_{Pa(B_i)})
\]
and, thus, $q$ satisfies the condition F1 above. Moreover, $q(x_{B_i Pa(B_i)})$ satisfies the condition F2 for all $1 \leq i \leq n$. We show this by induction on $i$. Let $\Lambda^i$ denote the precision matrix of $q(x_{B_i Pa(B_i)})$, and note that
\[
q(x_{B_i Pa(B_i)})= q^i(x_{B_i} | x_{Pa(B_i)}) q(x_{Pa(B_i)})
\]
by Equation \ref{eq:equalconditionals}. So, $\Lambda^i$ can be calculated from $q^i(x_{B_i} | x_{Pa(B_i)})$ and $q(x_{Pa(B_i)})$ via Equation \ref{eq:bishop2b}. Specifically, it follows from Equations \ref{eq:bishop2b} and \ref{eq:epsilon}, respectively \ref{eq:delta}, that
\begin{equation}\label{eq:lambda1}
\Lambda^i_{B_i,B_i} = \epsilon^i = \Omega^i_{B_i,B_i}
\end{equation}
and
\begin{equation}\label{eq:lambda2}
\Lambda^i_{B_i ,Pa(B_i)} = -\epsilon^i \delta^i =- \Omega^i_{B_i,B_i} [-(\Omega^i_{B_i,B_i})^{-1} \Omega^i_{B_i ,Pa(B_i)}]=\Omega^i_{B_i ,Pa(B_i)}.
\end{equation}
Consequently, due to the constraints C2 and C3 above, $\Lambda^i_{j,k}=0$ for all $j, k \in B_i Pa(B_i)$ such that $j$ and $k$ are not adjacent in $(G_{B_i Pa(B_i)})^m$. Moreover, $\Lambda^i_{j,k}=0$ is equivalent to $j \ci_{q(x_{B_i Pa(B_i)})} k | B_i Pa(B_i) \setminus j k$ \citep[Proposition 5.2]{Lauritzen1996}. This implies that $q(x_{B_i Pa(B_i)})$ factorizes with respect to $(G_{B_i Pa(B_i)})^m$ and, thus, that it satisfies the condition F2 above \citep[Proposition 3.30, Theorems 3.34 and 3.36]{Lauritzen1996}. Consequently, $q \in \mathcal{N}(G)$.

\textbf{Step 3} We finally show that different elements of the nd parameter space for $\mathcal{N}(G)$ are mapped into different probability distributions in $\mathcal{N}(G)$. Assume to the contrary that two distinct elements of the nd parameter space for $\mathcal{N}(G)$ are mapped into the same probability distribution $q \in \mathcal{N}(G)$. Assume that the two elements differ in the value for $\mu_{B_i}$, $\Omega^i_{B_i,B_i}$ or $\Omega^i_{B_i ,Pa(B_i)}$ but that they coincide in the values for $\mu_{B_l}$, $\Omega^l_{B_l,B_l}$ and $\Omega^l_{B_l ,Pa(B_l)}$ for all $1 \leq l < i$. There are two scenarios to consider:
\begin{itemize}
\item If the two elements differ in the value for $\Omega^i_{B_i,B_i}$ or $\Omega^i_{B_i ,Pa(B_i)}$, then they are mapped into two different $q(x_{B_i Pa(B_i)})$ by Equations \ref{eq:lambda1} and \ref{eq:lambda2}, because two regular Gaussian distributions with different precision matrices are different. However, this contradicts the assumption that the two elements are mapped into the same $q$.
\item If the two elements differ in the value for $\mu_{B_i}$ but they do not differ in the values for $\Omega^i_{B_i,B_i}$ and $\Omega^i_{B_i ,Pa(B_i)}$, then the two elements do not differ in the value for $\mu_{Pa(B_i)}$ either, because $Pa(B_i) \subseteq B_1 \ldots B_{i-1}$ and we assumed above that the two elements coincide in the values for $\mu_{B_l}$ for all $1 \leq l < i$. Then, the two elements are mapped into the same $\delta^i$ but different $\gamma^i$ in Equations \ref{eq:delta} and \ref{eq:gamma}. That is, the two elements are mapped into two different $q^i(x_{B_i} | x_{Pa(B_i)})$ and, thus, to two different $q(x_{B_i} | x_{Pa(B_i)})$ by Equation \ref{eq:equalconditionals}. However, this contradicts the assumption that the two elements are mapped into the same $q$.
\end{itemize}
\end{proof}

\begin{remark}\label{rem:polynomial}
Note the following three observations:
\begin{itemize}
\item For all $1 \leq i \leq n$, according to the constraints C1-C3 above, every entry of $\Omega^i_{B_i,B_i}$ and $\Omega^i_{B_i ,Pa(B_i)}$ is equal either to zero or to some nd parameter in the parameterization of the probability distributions in $\mathcal{N}(G)$.
\item For all $1 \leq i \leq n$, by Remark \ref{rem:inverse}, every entry of $(\Omega^i_{B_i,B_i})^{-1}$ is a fraction of real polynomials in the entries of $\Omega^i_{B_i,B_i}$ and, thus, a fraction of real polynomials in the nd parameters in the parameterization of the probability distributions in $\mathcal{N}(G)$. Thus, every entry of the matrices $\delta^i$ and $\epsilon^i$ in Equations \ref{eq:delta} and \ref{eq:epsilon} is also a fraction of real polynomials in the referred nd parameters.
\item Every entry of the precision matrix of $r^1(x_{B_1}) r^2(x_{B_2} | x_{B_1})$ in the proof above is, by Equation \ref{eq:bishop2b}, a real polynomial in the entries of $\delta^2$, $\epsilon^2$ and the precision matrix of $r^1(x_{B_1})$, i.e. $\epsilon^1$. Likewise, every entry of the precision matrix of $r^1(x_{B_1}) r^2(x_{B_2} | x_{B_1}) r^3(x_{B_3} | x_{B_1 B_2})$ in the proof above is a real polynomial in the entries of $\delta^3$, $\epsilon^3$ and the precision matrix of $r^1(x_{B_1}) r^2(x_{B_2} | x_{B_1})$, that is, a real polynomial in the entries of $\delta^3$, $\epsilon^3$, $\delta^2$, $\epsilon^2$ and $\epsilon^1$. Continuing with this process for the rest of connectivity components shows that every entry of the precision matrix of $q(x) = \prod_{i=1}^n r^i(x_{B_i} | x_{B_1 \ldots B_{i-1}})$ in the proof above is a real polynomial in the entries of the matrices $\epsilon^1$, and $\delta^i$ and $\epsilon^i$ for all $1 < i \leq n$.
\end{itemize}

It follows from the observations above that every entry of the precision matrix of $q$ in the proof above is a fraction of real polynomials in the nd parameters in the parameterization of the probability distributions in $\mathcal{N}(G)$. Consequently, by Remark \ref{rem:inverse}, every entry of the covariance matrix of $q$ is a fraction of real polynomials in the nd parameters in the parameterization of the probability distributions in $\mathcal{N}(G)$. Moreover, note the following two observations on the latter fractions:
\begin{itemize}
\item Each of these fractions is defined on the whole nd parameter space for $\mathcal{N}(G)$: The polynomial in the denominator of the fraction is non-vanishing in the nd parameter space for $\mathcal{N}(G)$ because, as we have proven in Step 1 in the theorem above, $q$ is a Gaussian distribution.
\item Within the nd parameter space for $\mathcal{N}(G)$, each of these fractions vanishes only in the points where the polynomial in the numerator of the fraction vanishes because, as we have just seen, the denominator of the fraction is non-vanishing in the nd parameter space for $\mathcal{N}(G)$.
\end{itemize}
\end{remark}

We now prove another result that will be crucial in the coming section.

\begin{lemma}\label{lem:spacecg}
Let $G$ be a CG of dimension $d$. The nd parameter space for $\mathcal{N}(G)$ has positive Lebesgue measure with respect to $\mathbb{R}^d$.
\end{lemma}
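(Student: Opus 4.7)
The plan is to exhibit a non-empty open subset of $\mathbb{R}^d$ contained in the nd parameter space for $\mathcal{N}(G)$. Since any non-empty open subset of $\mathbb{R}^d$ has positive Lebesgue measure with respect to $\mathbb{R}^d$, this will suffice.

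Recall that the nd parameters are $\mu$, together with, for each $1 \leq i \leq n$, the free entries of $\Omega^i_{B_i,B_i}$ (the diagonal entries and the off-diagonal entries indexed by undirected edges of $G$, with the remaining entries pinned to zero by C1 and C2) and the free entries of $\Omega^i_{B_i,Pa(B_i)}$ (those indexed by directed edges of $G$, the remaining entries being zero by C3). The only substantive constraint defining the nd parameter space is that each $\Omega^i_{B_i,B_i}$ be positive definite, the remaining nd parameters ranging freely over $\mathbb{R}$. Thus it is enough to show that this positive definiteness constraint cuts out an open set of positive measure in the corresponding slice of $\mathbb{R}^d$.

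The base point I would use is the configuration $\mu = \mathbf{0}$, $\Omega^i_{B_i,Pa(B_i)} = \mathbf{0}$, and $\Omega^i_{B_i,B_i}$ equal to the identity matrix on $B_i$ for every $1 \leq i \leq n$. This clearly satisfies the constraints C1--C3 and every $\Omega^i_{B_i,B_i}$ is positive definite. I would then argue that positive definiteness is preserved under small perturbations of the entries: for a real symmetric matrix, positive definiteness is equivalent to strict positivity of all eigenvalues, and eigenvalues depend continuously on the entries; alternatively, one can invoke the observation that a real symmetric matrix with positive diagonal entries and that is strictly diagonally dominant is positive definite, which is manifestly an open condition on the free entries. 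Either way, there is an open neighborhood $U_i \subseteq \mathbb{R}^{1 + e_i}$ of the identity in the space of symmetric matrices satisfying C1--C2 on $B_i$ (where $e_i$ counts undirected edges internal to $B_i$) such that every matrix in $U_i$ is positive definite.

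Taking the Cartesian product of $\mathbb{R}^{|V|}$ (for $\mu$), the open sets $U_1, \ldots, U_n$ (for the $\Omega^i_{B_i,B_i}$), and copies of $\mathbb{R}$ for every free entry of each $\Omega^i_{B_i,Pa(B_i)}$ (one for each directed edge of $G$), I obtain a non-empty open subset of $\mathbb{R}^d$ contained in the nd parameter space for $\mathcal{N}(G)$; the dimension count matches since $d = 2|V| + |G|$ by the discussion preceding Lemma \ref{lem:121cg}. Hence this subset has positive Lebesgue measure, and so does the nd parameter space. The only real subtlety is verifying that positive definiteness of each $\Omega^i_{B_i,B_i}$ is an open condition within the affine subspace cut out by C1--C2 (rather than in the ambient space of all matrices), but this is immediate from either of the two arguments above since we are restricting an open condition to a subspace that already contains the base point.
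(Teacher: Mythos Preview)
Your proposal is correct and follows essentially the same approach as the paper: both observe that the only substantive constraint is positive definiteness of each $\Omega^i_{B_i,B_i}$ and then exhibit an explicit region of positive volume satisfying it (the paper uses the box where diagonal entries lie in $(|B_i|-1,\infty)$ and off-diagonal free entries in $[-1,1]$, invoking strict diagonal dominance, while you perturb the identity and note that positive definiteness is open). One small slip: the neighborhood $U_i$ should sit in $\mathbb{R}^{|B_i|+e_i}$ rather than $\mathbb{R}^{1+e_i}$, since there are $|B_i|$ diagonal entries, but this does not affect the argument.
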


\begin{proof}
Since we do not know a closed-form expression of the nd parameter space for $\mathcal{N}(G)$, we take an indirect approach to prove the lemma. Recall that, by definition, the nd parameter space for $\mathcal{N}(G)$ is the set of real values such that, after the extension determined by the constraints C1 and C2, $\Omega^i_{B_i,B_i}$ is positive definite for all $1 \leq i \leq n$. Therefore, all the nd parameters except those in $\Omega^i_{B_i,B_i}$ for all $1 \leq i \leq n$ can take values independently of the rest of the nd parameters. The nd parameters in $\Omega^i_{B_i,B_i}$ cannot take values independently one of another because, otherwise, $\Omega^i_{B_i,B_i}$ may not be positive definite. However, if the entries in the diagonal of $\Omega^i_{B_i,B_i}$ take values in $(|B_i|-1, \infty)$ and the rest of the nd parameters in $\Omega^i_{B_i,B_i}$ take values in $[-1,1]$, then the nd parameters in $\Omega^i_{B_i,B_i}$ can take values independently one of another. To see it, note that in this case $\Omega^i_{B_i,B_i}$ will always be Hermitian, strictly diagonally dominant, and with strictly positive diagonal entries, which implies that $\Omega^i_{B_i,B_i}$ will always be positive definite \citep[Corollary 7.2.3]{HornandJohnson1985}.

The subset of the nd parameter space of $\mathcal{N}(G)$ described in the paragraph above has positive volume in $\mathbb{R}^d$ and, thus, it has positive Lebesgue measure with respect to $\mathbb{R}^d$. Then, the nd parameter space of $\mathcal{N}(G)$ has positive Lebesgue measure with respect to $\mathbb{R}^d$.
\end{proof}

\section{Faithfulness in chain graphs}\label{sec:faithfulness}

The two theorems below are the main contribution of this manuscript. They prove that for any CG $G$, in the measure-theoretic sense described below, almost all the probability distributions in $\mathcal{N}(G)$ are faithful to $G$.

\begin{theorem}\label{the:f1}
Let $G$ be a CG of dimension $d$. $\mathcal{N}(G)$ has positive Lebesgue measure with respect to $\mathbb{R}^d$.
\end{theorem}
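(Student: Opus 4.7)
The plan is to observe that Theorem \ref{the:f1} follows almost immediately by combining the two preceding lemmas, since these jointly reduce the measure-theoretic statement about $\mathcal{N}(G)$ to a measure-theoretic statement about a concrete subset of $\mathbb{R}^d$.

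First I would invoke Lemma \ref{lem:121cg}, which supplies a bijection between $\mathcal{N}(G)$ and the nd parameter space for $\mathcal{N}(G)$. This bijection is what allows one to speak of a Lebesgue measure on $\mathcal{N}(G)$ at all: the measure of a subset of $\mathcal{N}(G)$ is by definition the Lebesgue measure of its image in $\mathbb{R}^d$ under the nd parameterization. In particular, the Lebesgue measure of $\mathcal{N}(G)$ itself equals the Lebesgue measure of the entire nd parameter space for $\mathcal{N}(G)$ viewed as a subset of $\mathbb{R}^d$.

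Then I would appeal directly to Lemma \ref{lem:spacecg}, which states that the nd parameter space for $\mathcal{N}(G)$ has positive Lebesgue measure with respect to $\mathbb{R}^d$. Chaining this with the identification produced by Lemma \ref{lem:121cg} yields the desired conclusion.

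In other words, there is essentially no obstacle to overcome at this stage: all the real work has been absorbed into the two lemmas. The only thing to be careful about is to make clear that the Lebesgue measure on $\mathcal{N}(G)$ is the one transported from $\mathbb{R}^d$ via the nd parameterization, so that the two lemmas indeed compose to give the theorem. The proof should therefore amount to a single short paragraph citing Lemmas \ref{lem:121cg} and \ref{lem:spacecg}.
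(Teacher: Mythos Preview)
Your proposal is correct and matches the paper's own proof essentially verbatim: the paper simply invokes Lemma~\ref{lem:121cg} to identify the measure of $\mathcal{N}(G)$ with that of the nd parameter space, and then cites Lemma~\ref{lem:spacecg} to conclude that this measure is positive. Your remark about the transported measure is apt and is exactly the content of the paper's one-sentence justification.
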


\begin{proof}
The one-to-one correspondence proved in Lemma \ref{lem:121cg} enables us to compute the Lebesgue measure with respect to $\mathbb{R}^d$ of $\mathcal{N}(G)$ as the Lebesgue measure with respect to $\mathbb{R}^d$ of the nd parameter space for $\mathcal{N}(G)$. Moreover, the latter is positive by Lemma \ref{lem:spacecg}.
\end{proof}

Before proving the second theorem, some auxiliary lemmas are proven.

\begin{lemma}\label{lem:subset}
Let $G$ and $H$ be two CGs such that the undirected (resp. directed) edges in $H$ are a subset of the undirected (resp. directed) edges in $G$. Then, $\mathcal{N}(H) \subseteq \mathcal{N}(G)$.
\end{lemma}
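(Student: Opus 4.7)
The plan is to reduce the statement to a comparison of Markov properties via the equivalence between factorization (F1+F2) and the global Markov property for chain graphs \citep[Proposition 3.30, Theorems 3.34 and 3.36]{Lauritzen1996}. Arguing directly from F1+F2 is awkward because adding undirected edges when passing from $H$ to $G$ may merge connectivity components, so $H$ and $G$ generally have different component decompositions; phrasing everything in terms of separation statements on $V$ bypasses this difficulty, since those statements live in a common language that does not depend on the component structure of either graph.

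Under this reduction it suffices to show that whenever $I \ci_G J | K$ for disjoint $I, J, K \subseteq V$, also $I \ci_H J | K$, so that any $p \in \mathcal{N}(H)$ automatically satisfies every conditional independence demanded by $G$. I would prove the contrapositive at the level of routes. Let $\rho = v_1, \ldots, v_l$ be any route in $H$ that is superactive with respect to $K$. Since the undirected (resp.\ directed) edges of $H$ are contained in those of $G$, the same sequence $\rho$ is a route in $G$ in which each edge has exactly the same type and, if directed, the same orientation as in $H$.

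Consequently, the maximal undirected subroutes of $\rho$, i.e.\ its sections, coincide whether $\rho$ is viewed as a route in $H$ or in $G$, and the collider-section status of each section is decided solely by the two edges of $\rho$ flanking it, which are unchanged. Hence $\rho$ is also superactive with respect to $K$ in $G$; applied to a route from a node in $I$ to a node in $J$ this gives $I \nci_G J | K$. By contraposition, $I \ci_G J | K \Rightarrow I \ci_H J | K$, so any distribution Markov to $H$ is Markov to $G$, and the equivalence between Markovianity and factorization closes the argument.

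I do not anticipate a serious obstacle. The only mildly delicate point is to make explicit that the section decomposition of a route and the designation of its collider sections depend only on the sequence of edges the route actually traverses, not on the surrounding graph; once this local observation is in place, the argument reduces entirely to the edge-inclusion hypothesis relating $H$ and $G$.
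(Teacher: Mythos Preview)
Your proposal is correct and follows essentially the same approach as the paper: both reduce factorization to the global Markov property via \citep[Proposition 3.30, Theorems 3.34 and 3.36]{Lauritzen1996} and then use that the independence model of $H$ contains that of $G$. The only difference is that the paper asserts this separation containment in one sentence, whereas you spell out the route-level argument justifying it.
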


\begin{proof}
Note that a regular Gaussian distribution factorizes with respect to a CG iff it is Markovian with respect to the CG \citep[Proposition 3.30, Theorems 3.34 and 3.36]{Lauritzen1996}. Then, $\mathcal{N}(H) \subseteq \mathcal{N}(G)$ because the independence model represented by $H$ is a superset of that represented by $G$.
\end{proof}

\begin{lemma}\label{lem:chunk1}
Let $G$ be a CG such that
\begin{enumerate}
\item $G$ has a route between the nodes $i$ and $j$ that has no collider section, and
\item the route has no node in $Z \subseteq V \setminus ij$.
\end{enumerate}
Then, there exists a probability distribution $p \in \mathcal{N}(G)$ such that $i \nci_p j | Z$.
\end{lemma}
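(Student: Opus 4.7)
The plan is to invoke Lemma \ref{lem:subset} to reduce to a simpler CG and then construct an explicit Gaussian distribution. Let $\rho = v_{1},\ldots,v_{l}$ denote the given route, with $v_{1}=i$ and $v_{l}=j$, and let $W$ be its vertex set. Define $H$ to be the CG on $V$ whose edges are exactly those of $G$ appearing in $\rho$, with the same types and orientations. Then $H$ is a CG (edge deletion preserves the absence of directed pseudocycles) and its edges form a subset of those of $G$; hence Lemma \ref{lem:subset} gives $\mathcal{N}(H)\subseteq \mathcal{N}(G)$, so it suffices to construct $p\in \mathcal{N}(H)$ with $i\nci_{p}j\mid Z$. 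Since $Z\cap W=\emptyset$, every node of $Z$ is isolated in $H$.

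The first step of the construction exploits this isolation. For every connectivity component $B_{m}$ of $H$ disjoint from $W$---such a $B_{m}$ is a singleton $\{v\}$ with $Pa_{H}(B_{m})=\emptyset$, since all edges of $H$ lie on $\rho$---set $\mu_{v}=0$ and $\Omega^{m}_{B_{m},B_{m}}=1$. By Equations \ref{eq:delta}--\ref{eq:epsilon} and the construction in Lemma \ref{lem:121cg}, this makes the $V\setminus W$ coordinates of $p$ a standard independent Gaussian, independent of $X_{W}$. In particular $X_{Z}\ci_{p}X_{W}$, so $i\ci_{p}j\mid Z$ reduces to the marginal statement $i\ci_{p}j$.

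For the remaining connectivity components $B_{m}$ of $H$ (those meeting $W$), set all diagonal entries of $\Omega^{m}_{B_{m},B_{m}}$ to a common large value $\kappa$, all means to $0$, and all non-identically-zero off-diagonal entries of $\Omega^{m}_{B_{m},B_{m}}$ and entries of $\Omega^{m}_{B_{m},Pa_{H}(B_{m})}$---each of which corresponds to a single edge of $\rho$---to a common value $\eta$. As in the proof of Lemma \ref{lem:spacecg}, for $\kappa$ sufficiently large relative to $|\eta|$ each $\Omega^{m}_{B_{m},B_{m}}$ is strictly diagonally dominant and hence positive definite, so the full parameter vector lies in the nd parameter space and by Lemma \ref{lem:121cg} determines a unique $p\in \mathcal{N}(H)\subseteq \mathcal{N}(G)$.

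The crux of the argument, and the step I expect to be the main obstacle, is verifying that $\mathrm{Cov}(X_{i},X_{j})\neq 0$ in $p$ for some choice of $\eta$. By Remark \ref{rem:polynomial} this covariance is a rational function of $\eta$ (with $\kappa$ fixed) that vanishes at $\eta=0$ where all edges decouple, so it suffices to show that its power series expansion has a non-vanishing leading coefficient. Each term of order $k$ in $\eta$ corresponds to a ``walk'' of length $k$ in $H$ from $i$ to $j$, each edge contributing a factor of $\eta$ from either a partial correlation (undirected edge) or a regression coefficient (directed edge); because $H$ was trimmed to contain only the edges of $\rho$, the only walks available are concatenations of (parts of) $\rho$, and the shortest contributes a monomial of order equal to its length. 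The no-collider-section hypothesis is precisely what prevents this leading coefficient from vanishing: at a collider section the two boundary directed edges point inwards, and the marginal covariance of the endpoints of $\rho$ would then require conditioning on an interior node of the section, which our conditioning set $Z$ cannot provide, being disjoint from $W$. Without any collider section along $\rho$, each edge's factor of $\eta$ survives marginalization, so for every sufficiently small non-zero $\eta$ one has $\mathrm{Cov}(X_{i},X_{j})\neq 0$, giving $i\nci_{p}j$ and therefore $i\nci_{p}j\mid Z$.
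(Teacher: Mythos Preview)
Your reduction to $H$ and the observation that $Z$-nodes are isolated, so that $i\nci_{p}j\mid Z$ is equivalent to $i\nci_{p}j$, are fine and parallel the paper's setup. The genuine gap is the final paragraph. The assertion that the leading coefficient of the $\eta$-expansion of $\mathrm{Cov}(X_i,X_j)$ is nonzero is never actually established: the ``walk'' interpretation of the terms is a heuristic, not a computation, and the sentence ``the no-collider-section hypothesis is precisely what prevents this leading coefficient from vanishing'' is essentially circular---it invokes the very dependence/independence intuition you are trying to prove. Note too that you never reduce the route to a path, so $H$ need not be a simple chain; if $\rho$ revisits nodes, $H$ can contain branches or cycles, and then neither ``the only walks available are concatenations of (parts of) $\rho$'' nor the identification of the shortest walk with $\rho$ itself is justified. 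Making the perturbative argument rigorous for general chain-graph parameterizations would require a path-tracing formula for $\mathrm{Cov}$ in the CG setting, which you have not supplied.

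The paper avoids all of this with a short structural argument. After trimming to $H$ (first converting the route to a path by deleting repeated subwalks, which preserves the no-collider-section property), it observes that a CG whose only edges lie on a path with no collider section has \emph{no complexes}. By Frydenberg's equivalence theorem, such an $H$ is Markov equivalent to its underlying UG $L$. One then cites the Ln\v{e}ni\v{c}ka--Mat\'u\v{s} result that every UG admits a faithful regular Gaussian $p$; faithfulness to $L$ gives $i\nci_{p}j\mid Z$ directly (since $i\nci_{L}j\mid Z$), and Markov equivalence transports $p\in\mathcal{N}(L)$ to $p\in\mathcal{N}(H)\subseteq\mathcal{N}(G)$. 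This sidesteps any explicit covariance computation.
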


\begin{proof}
The route in the lemma can be converted into a path $\rho$ between $i$ and $j$ in $G$ as follows: Iteratively, remove from the route any subroute between a node and itself. Note that none of these removals produces a collider section: It suffices to note that if the route after the removal has a collider section, then the route before the removal must have a collider section, which is a contradiction. Consequently, $\rho$ is a path between $i$ and $j$ in $G$ that has no collider section. Therefore, $\rho$ is superactive with respect to $Z$: Since the route in the lemma has no node in $Z$, $\rho$ has no node in $Z$ either. Now, remove from $G$ all the edges that are not in $\rho$, and call the resulting CG $H$. Note that $H$ has no complex since $\rho$ has no collider section. Drop the direction of every edge in $H$ and call the resulting UG $L$. Now, note that there exists a regular Gaussian distribution $p$ that is faithful to $L$ \citep[Corollary 3]{LnenickaandMatus2007} and, thus, $i \nci_p j | Z$ because $i \nci_L j | Z$. Note also that the fact that $p$ is faithful to $L$ implies that $p$ is Markovian with respect to $L$ which, in turn, implies that $p$ is also Markovian with respect to $H$, because $H$ and $L$ have the same underlying UG and complexes \citep[Theorem 5.6]{Frydenberg1990}. Consequently, $p \in \mathcal{N}(H)$ \citep[Proposition 3.30, Theorems 3.34 and 3.36]{Lauritzen1996} and, thus, $p \in \mathcal{N}(G)$ because $\mathcal{N}(H) \subseteq \mathcal{N}(G)$ by Lemma \ref{lem:subset}.
\end{proof}

\begin{lemma}\label{lem:polynomial}
Let $G$ be a CG. For every $i, j \in V$ and $Z \subseteq V \setminus ij$, there exists a real polynomial $S(i,j,Z)$ in the nd parameters in the parameterization of the probability distributions in $\mathcal{N}(G)$ such that, for every $p \in \mathcal{N}(G)$, $i \ci_p j | Z$ iff $S(i,j,Z)$ vanishes for the nd parameter values coding $p$.
\end{lemma}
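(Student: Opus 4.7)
The plan is to combine the standard Gaussian characterization of pairwise conditional independence with the rational-function structure of the covariance matrix supplied by Remark \ref{rem:polynomial}.

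Fix $p = \mathcal{N}(\mu, \Sigma) \in \mathcal{N}(G)$. The first step is to reduce $i \ci_p j | Z$ to the vanishing of a submatrix determinant of $\Sigma$. The standard fact for regular Gaussians is that $i \ci_p j | Z$ is equivalent to the conditional covariance $\Sigma_{i,j} - \Sigma_{i,Z}(\Sigma_{Z,Z})^{-1}\Sigma_{Z,j}$ being zero. The Schur-complement identity
\[
\det\begin{pmatrix}\Sigma_{Z,Z} & \Sigma_{Z,j} \\ \Sigma_{i,Z} & \Sigma_{i,j}\end{pmatrix} = \det(\Sigma_{Z,Z})\bigl(\Sigma_{i,j} - \Sigma_{i,Z}(\Sigma_{Z,Z})^{-1}\Sigma_{Z,j}\bigr)
\]
then rewrites this quantity as $\det(\Sigma_{Zi,Zj})/\det(\Sigma_{Z,Z})$. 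Since $\Sigma$ is positive definite, its principal submatrix $\Sigma_{Z,Z}$ is positive definite as well, so $\det(\Sigma_{Z,Z}) > 0$, whence $i \ci_p j | Z$ iff $\det(\Sigma_{Zi,Zj}) = 0$.

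The second step is to translate this determinantal condition into a polynomial condition on the nd parameters. Observe that $\det(\Sigma_{Zi,Zj})$ is a real polynomial in the entries of $\Sigma$. By Remark \ref{rem:polynomial}, every entry of $\Sigma$ is a fraction of real polynomials in the nd parameters whose denominator does not vanish on the nd parameter space for $\mathcal{N}(G)$. Substituting these expressions into the polynomial formula for $\det(\Sigma_{Zi,Zj})$ and clearing denominators by multiplying through by the product $D$ of the finitely many denominators that appear yields a real polynomial $S(i,j,Z)$ in the nd parameters such that $\det(\Sigma_{Zi,Zj}) = S(i,j,Z)/D$. Since $D$ is itself non-vanishing on the nd parameter space, the two sides vanish together, and combining with the previous step shows that $i \ci_p j | Z$ iff $S(i,j,Z)$ vanishes at the nd parameter values coding $p$.

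The argument is essentially bookkeeping once both ingredients are identified, and no serious obstacle is anticipated. The only points needing care are orienting the indices correctly in the Schur-complement identity and verifying that the common denominator obtained after clearing fractions remains non-vanishing throughout the nd parameter space, both of which are standard and, in the second case, directly provided by Remark \ref{rem:polynomial}.
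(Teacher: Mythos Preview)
Your proposal is correct and follows essentially the same approach as the paper: both arguments reduce $i \ci_p j | Z$ to the vanishing of $\det(\Sigma_{iZ,jZ})$ (the paper via the cofactor formula for $((\Sigma_{ijZ,ijZ})^{-1})_{i,j}$, you via the conditional covariance and a Schur-complement identity), and then invoke Remark~\ref{rem:polynomial} to clear the non-vanishing denominators and obtain the polynomial $S(i,j,Z)$.
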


\begin{proof}
Let $\Sigma$ denote the covariance matrix of $p$. Note that $i \ci_p j | Z$ iff $((\Sigma_{ijZ,ijZ})^{-1})_{i,j}=0$ \citep[Proposition 5.2]{Lauritzen1996}. Recall that $((\Sigma_{ijZ,ijZ})^{-1})_{i,j}=(-1)^\alpha det(\Sigma_{iZ,jZ}) / det(\Sigma_{ijZ,ijZ})$ with $\alpha \in \{0,1\}$. Note that $det(\Sigma_{ijZ,ijZ}) > 0$ because $\Sigma_{ijZ,ijZ}$ is positive definite \citep[p. 237]{Studeny2005}. Then, $i \ci_p j | Z$ iff $det(\Sigma_{iZ,jZ})=0$. Thus, $i \ci_p j | Z$ iff a real polynomial $R(i,j,Z)$ in the entries of $\Sigma$ vanishes due to Remark \ref{rem:inverse}. However, note that it follows from Lemma \ref{lem:121cg} and Remark \ref{rem:polynomial} that each entry of $\Sigma$ is a fraction of real polynomials in the nd parameters in the parameterization of the probability distributions in $\mathcal{N}(G)$. Recall also from Remark \ref{rem:polynomial} that the polynomial in the denominator of each of these fractions is non-vanishing in the nd parameter space for $\mathcal{N}(G)$. Therefore, by simple algebraic manipulation, the polynomial $R(i,j,Z)$ can be expressed as a fraction $S(i,j,Z) / T(i,j,Z)$ of real polynomials in the nd parameters where $T(i,j,Z)$ is non-vanishing in the nd parameter space for $\mathcal{N}(G)$. Consequently, $i \ci_p j | Z$ iff the real polynomial $S(i,j,Z)$ in the nd parameters vanishes for the values coding $p$.
\end{proof}

We interpret the polynomial in the lemma above as a real function on a real Euclidean space that includes the nd parameter space for $\mathcal{N}(G)$. We say that the polynomial in the lemma above is non-trivial if not all the values of the nd parameters are solutions to the polynomial. This is equivalent to the requirement that the polynomial is not identically zero, because the nd parameter space for $\mathcal{N}(G)$ contains a $d$-dimensional interval in $\mathbb{R}^d$, where $d$ is the dimension of $G$ (recall the proof of Lemma \ref{lem:spacecg}).

Let $\nu$ denote an undirected route $v_{2} - \ldots - v_{l-1}$ in a CG. Hereinafter, we denote by $v_1 \rightarrow \nu \leftarrow v_l$ the route $v_1 \rightarrow v_{2} - \ldots - v_{l-1} \leftarrow v_l$.

\begin{lemma}\label{lem:chunk2}
Let $G$ be a CG such that
\begin{enumerate}
\item $G$ has a route $i \rightarrow \nu \leftarrow j$ where $i, j \in V$ and $\nu$ is an undirected route, and
\item some node in $\nu$ is in $Z$ or has a descendant in $Z$, where $Z \subseteq V \setminus ij$.
\end{enumerate}
Then, there exists a probability distribution $p \in \mathcal{N}(G)$ such that $i \nci_p j | Z$.
\end{lemma}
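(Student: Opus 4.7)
The plan mirrors the proof of Lemma \ref{lem:chunk1}: reduce to a carefully chosen small subgraph $H$ of $G$, exhibit one $p \in \mathcal{N}(H)$ with $i \nci_p j | Z$ by hand, then lift via Lemma \ref{lem:subset}. The new obstacle is the collider section on $\nu$: the Frydenberg/Lnenicka-Matus shortcut used in Lemma \ref{lem:chunk1} does not apply, since it crucially relied on $H$ having no complex, whereas here we need the complex $i \to \nu' \leftarrow j$ to remain in $H$ to keep the route superactive with respect to $Z$.

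First I would convert the route $i \to \nu \leftarrow j$ into a path by iteratively deleting the subroute between every pair of coincident nodes, exactly as in the proof of Lemma \ref{lem:chunk1}; since no such deletion can turn a non-collider section into a collider section or vice versa, the result is a path $i \to \nu' \leftarrow j$ in which $\nu'$ remains an undirected collider section. Next, fix a witness $u$ on $\nu'$ satisfying condition 2. If $u \in Z$, take $\pi$ to consist of $u$ alone; otherwise pick a descending route from $u$ to some node of $Z$, convert it to a descending path in the same way, truncate it at the first node that lies in $Z$, and, by replacing $u$ with the last node at which $\pi$ meets $\nu'$ and $\pi$ with the corresponding suffix, arrange that $\pi \cap \nu' = \{u\}$. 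Now let $H$ be the CG obtained from $G$ by removing every edge not on $i \to \nu' \leftarrow j$ or on $\pi$. By Lemma \ref{lem:subset} we have $\mathcal{N}(H) \subseteq \mathcal{N}(G)$, so it suffices to produce one $p \in \mathcal{N}(H)$ with $i \nci_p j | Z$.

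To build $p$, I would use the parameterization of Section \ref{sec:cg}. Set $\mu = 0$. Enumerate the connectivity components $B_1, \ldots, B_n$ of $H$ (treating as singletons all nodes isolated in $H$). For each $B_k$, choose $\Omega^k_{B_k,B_k}$ with diagonal entries equal to some constant $c > |B_k|-1$, off-diagonal entries equal to a common small parameter $\varepsilon \in (0,1]$ at the positions permitted by C2, and zeros elsewhere; this matrix is Hermitian and strictly diagonally dominant with positive diagonal, hence positive definite by \citep[Corollary 7.2.3]{HornandJohnson1985}, placing us in the nd parameter space for $\mathcal{N}(H)$. Choose $\Omega^k_{B_k,Pa(B_k)}$ with entries equal to $\varepsilon$ at the positions permitted by C3 and zeros elsewhere. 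Lemma \ref{lem:121cg} then returns a unique $p \in \mathcal{N}(H)$.

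The main obstacle is proving $i \nci_p j | Z$ for this $p$. By the proof of Lemma \ref{lem:polynomial} this reduces to showing that $\det(\Sigma_{\{i\}Z,\{j\}Z}) \neq 0$, where $\Sigma$ is the covariance of $p$. By Remark \ref{rem:polynomial}, every entry of $\Sigma$ is a rational function of the nd parameters whose denominator is non-vanishing on the whole nd parameter space, so with $c$ fixed each entry expands as a power series in $\varepsilon$. My plan is a leading-order analysis: by construction $H$ contains a unique superactive route from $i$ to $j$ with respect to $Z$, namely the one running through $\nu'$ and $\pi$, so I expect $\det(\Sigma_{\{i\}Z,\{j\}Z})$ to acquire from that route a unique monomial of minimal $\varepsilon$-degree that no other term in the expansion can cancel, forcing the determinant to be non-zero for all sufficiently small $\varepsilon > 0$. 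Then $i \nci_p j | Z$ follows from \citep[Proposition 5.2]{Lauritzen1996}, completing the construction.
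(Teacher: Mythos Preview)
Your reduction to a small subgraph $H$ parallels the paper's, but the way you produce $p\in\mathcal{N}(H)$ with $i \nci_p j \mid Z$ is genuinely different from what the paper does, and your final step has a real gap.

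The paper never computes a covariance or a determinant. After passing to the path $i\to\vartheta\leftarrow j$ and stripping $G$ down to $H$, it picks a single distinguished node $k$ (in Case 1, the node of $Z$ on $\vartheta$ closest to $i$; in Case 2, the endpoint of a shortest descending path from $\vartheta$ into $Z$), notes that $i \ci_H j$ holds because the complex is the only connection, and then uses Lemma \ref{lem:polynomial} together with Lemma \ref{lem:chunk1} and the zero-measure argument of \citep{Okamoto1973} (plus Theorem \ref{the:f1}) to find one $p\in\mathcal{N}(H)$ satisfying both $i \nci_p k$ and $j \nci_p k$. Weak transitivity on $i \ci_p j$, $i \nci_p k$, $j \nci_p k$ forces $i \nci_p j \mid k$, hence $i \nci_p jZ\setminus k \mid k$ by decomposition; the careful choice of $k$ guarantees $i \ci_H Z\setminus k \mid jk$, and intersection then upgrades $k$ to all of $Z$. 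The Gaussian graphoid properties do all the work; no entry of $\Sigma$ is ever examined.

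Your route instead fixes a one-parameter family $p_\varepsilon$ and asserts that $\det(\Sigma_{iZ,jZ})$ has a unique lowest-order monomial in $\varepsilon$ coming from ``the unique superactive route''. That assertion is precisely the hard part, and you have not established it. Two concrete obstacles: first, $Z$ may contain several nodes lying on $\nu'$ (condition 2 guarantees one witness but does not exclude others), so there is no reason the superactive route through the collider section is unique, and the corresponding contributions to the Leibniz expansion may share the same $\varepsilon$-degree and interact. Second, by Remark \ref{rem:polynomial} each entry of $\Sigma$ is a rational function of $\varepsilon$ built recursively through Equations \ref{eq:bishop2b}--\ref{eq:bishop2c} across the connectivity components of $H$, so isolating the leading term of a $(|Z|+1)\times(|Z|+1)$ determinant of such entries and ruling out cancellation is a combinatorial lemma in its own right, not a remark. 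Your ``I expect'' acknowledges this; as it stands the proposal is a plausible heuristic rather than a proof, and the paper's weak-transitivity/intersection argument is exactly the device that sidesteps this computation.
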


\begin{proof}
The route $\nu$ can be converted into a path $\vartheta$ in $G$ as follows: Iteratively, remove from $\nu$ any subroute between a node and itself. Note that $\nu$ does not contain either $i$ or $j$ because, otherwise, $G$ would have a directed pseudocycle between $i$ and itself or between $j$ and itself, which is a contradiction. Therefore, $\vartheta$ does not contain either $i$ or $j$ and, thus, $i \rightarrow \vartheta \leftarrow j$ is a path in $G$. Note that the subroutes removed from $\nu$ contain only undirected edges. Therefore, every node that is in $\nu$ but not in $\vartheta$ is a descendant of some node in $\vartheta$. Consequently, some node in $\vartheta$ is in $Z$ or has a descendant in $Z$, due to the assumptions in the lemma.

We first prove the lemma for the case where some node in $\vartheta$ is in $Z$. Remove from $G$ all the edges that are not in $i \rightarrow \vartheta \leftarrow j$, and call the resulting CG $H$. Note that $i \rightarrow \vartheta \leftarrow j$ is a complex in $H$ and, thus, that $i \ci_{H} j$. Let $k$ denote the closest node to $i$ that it is in $\vartheta$ and in $Z$.

We prove in this paragraph that there exists a probability distribution $p \in \mathcal{N}(H)$ such that $i \nci_p j Z \setminus k | k$. By Lemma \ref{lem:polynomial}, there exists a real polynomial $S(i,k,\emptyset)$ in the nd parameters in the parameterization of the probability distributions in $\mathcal{N}(H)$ such that, for every $q \in \mathcal{N}(H)$, $i \ci_q k$ iff $S(i,k,\emptyset)$ vanishes for the nd parameter values coding $q$. Furthermore, $S(i,k,\emptyset)$ is non-trivial. To see this, remove from $H$ all the edges outside the path between $i$ and $k$, and call the resulting CG $L$. Note that $\mathcal{N}(L) \subseteq \mathcal{N}(H)$ by Lemma \ref{lem:subset}. Now note that, by Lemma \ref{lem:chunk1}, there exists a probability distribution $r \in \mathcal{N}(L)$ such that $i \nci_r k$. By an analogous reasoning, we can conclude that there exists a non-trivial real polynomial $S(j,k,\emptyset)$ in the nd parameters in the parameterization of the probability distributions in $\mathcal{N}(H)$ such that, for every $q \in \mathcal{N}(H)$, $j \ci_q k$ iff $S(j,k,\emptyset)$ vanishes for the nd parameter values coding $q$. Let $sol(i,k,\emptyset)$ and $sol(j,k,\emptyset)$ denote the sets of solutions to the polynomials $S(i,k,\emptyset)$ and $S(j,k,\emptyset)$, respectively. Let $d$ denote the dimension of $H$. Then, $sol(i,k,\emptyset)$ and $sol(j,k,\emptyset)$ have both zero Lebesgue measure with respect to $\mathbb{R}^d$ because they consist of the solutions to non-trivial real polynomials in real variables (the nd parameters) \citep{Okamoto1973}. Then, $sol=sol(i,k,\emptyset) \cup sol(j,k,\emptyset)$ also has zero Lebesgue measure with respect to $\mathbb{R}^d$, because the finite union of sets of zero Lebesgue measure has zero Lebesgue measure too. Consequently, the probability distributions $q \in \mathcal{N}(H)$ such that $i \ci_q k$ or $j \ci_q k$ correspond to a set of elements of the nd parameter space for $\mathcal{N}(H)$ that has zero Lebesgue measure with respect to $\mathbb{R}^d$ because it is contained in $sol$. Since this correspondence is one-to-one by Lemma \ref{lem:121cg}, the probability distributions $q \in \mathcal{N}(H)$ such that $i \ci_q k$ or $j \ci_q k$ also have zero Lebesgue measure with respect to $\mathbb{R}^d$. This result together with Theorem \ref{the:f1} imply that there exists a probability distribution $p \in \mathcal{N}(H)$ such that $i \nci_p k$ and $j \nci_p k$. Furthermore, as shown above $i \ci_{H} j$ and, thus, $i \ci_p j$ because $p$ is Markovian with respect to $H$, since $p \in \mathcal{N}(H)$ \citep[Proposition 3.30, Theorems 3.34 and 3.36]{Lauritzen1996}. Then, $i \nci_p j | k$ by symmetry and weak transitivity and, thus, $i \nci_p j Z \setminus k | k$ by decomposition.

Finally, recall that $k$ is the closest node to $i$ that it is in $\vartheta$ and in $Z$, then $i \ci_H Z \setminus k | j k$ and thus $i \ci_p Z \setminus k | j k$ because $p$ is Markovian with respect to $H$. Then, $i \nci_p j | Z$ by intersection on $i \nci_p j Z \setminus k | k$ and $i \ci_p Z \setminus k | j k$. Consequently, we have proven that there exists a probability distribution $p \in \mathcal{N}(H)$ such that $i \nci_p j | Z$. Moreover, $p \in \mathcal{N}(G)$ because $\mathcal{N}(H) \subseteq \mathcal{N}(G)$ by Lemma \ref{lem:subset}.

We now prove the lemma for the case where no node in $\vartheta$ is in $Z$ but some node in $\vartheta$ has a descendant in $Z$. Consider the shortest descending path between a node in $\vartheta$ and a node in $Z$. Let $l$ and $k$ denote the initial and final nodes of the path, i.e. $k \in Z$. Remove from $G$ all the edges that are not in $i \rightarrow \vartheta \leftarrow j$ or in the path between $l$ and $k$, and call the resulting CG $H$. Note that $i \rightarrow \vartheta \leftarrow j$ is a complex in $H$ and, thus, that $i \ci_{H} j$. Therefore, we can follow the same steps as above to prove that there exists a probability distribution $p \in \mathcal{N}(H)$ such that $i \nci_p j Z \setminus k | k$. Finally, recall that there is no path between $i$ and any node in $Z \setminus k$ in $H$, then $i \ci_H Z \setminus k | j k$ and thus $i \ci_p Z \setminus k | j k$ because $p$ is Markovian with respect to $H$. Then, $i \nci_p j | Z$ by intersection on $i \nci_p j Z \setminus k | k$ and $i \ci_p Z \setminus k | j k$. Consequently, we have proven that there exists a probability distribution $p \in \mathcal{N}(H)$ such that $i \nci_p j | Z$. Moreover, $p \in \mathcal{N}(G)$ because $\mathcal{N}(H) \subseteq \mathcal{N}(G)$ by Lemma \ref{lem:subset}.
\end{proof}

\begin{lemma}\label{lem:non-trivial}
Let $G$ be a CG such that $i \nci_G j | Z$, where $i, j \in V$ and $Z \subseteq V \setminus ij$. Then, there exists a probability distribution $p \in \mathcal{N}(G)$ such that $i \nci_p j | Z$.
\end{lemma}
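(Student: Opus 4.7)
My plan is to decompose the superactive route into chunks handled by Lemmas~\ref{lem:chunk1} and~\ref{lem:chunk2}, produce a single $p$ in a subgraph of $G$ witnessing all the local dependencies simultaneously, and then glue them into the desired global dependence via iterated weak transitivity. Fix a superactive route $\rho$ from $i$ to $j$ with respect to $Z$; we may assume $\rho$ is a simple path (otherwise shorten using the moralization-criterion equivalence, or prune repeats in the spirit of the proofs of Lemmas~\ref{lem:chunk1} and~\ref{lem:chunk2} with care for the collider classification). Let $\sigma_1^*, \ldots, \sigma_r^*$ be its collider sections in order along $\rho$, pick $z_l \in \sigma_l^* \cap Z$ (nonempty by superactivity), and let $a_l \to \sigma_l^* \leftarrow b_l$ be the complex at $\sigma_l^*$. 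The base case $r = 0$ reduces to Lemma~\ref{lem:chunk1} directly, since a superactive route with no collider section has no node in $Z$. For $r \ge 1$ I work inside the subgraph $H \subseteq G$ whose edges are exactly those of $\rho$, so that $\mathcal{N}(H) \subseteq \mathcal{N}(G)$ by Lemma~\ref{lem:subset}, and consider the chain $w_0 = i$, $w_{2l-1} = a_l$, $w_{2l} = b_l$, $w_{2r+1} = j$ (de-duplicating any coincident consecutive entries).

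Consecutive pairs $(w_k, w_{k+1})$ come in two types along $\rho$: complex pairs $(a_l, b_l)$ connected by the complex $a_l \to \sigma_l^* \leftarrow b_l$ with $z_l \in Z$ in the region, to which Lemma~\ref{lem:chunk2} applies inside $H$; and non-collider pairs $(i, a_1)$, $(b_l, a_{l+1})$, $(b_r, j)$, whose connecting subroutes of $\rho$ lie wholly in non-collider sections, so they have no collider section and no node in $Z$ (by superactivity of $\rho$) and are handled by Lemma~\ref{lem:chunk1} inside $H$. Each application produces a distribution in $\mathcal{N}(H)$ witnessing $w_k \nci w_{k+1} | Z$, so by Lemma~\ref{lem:polynomial} every polynomial $S(w_k, w_{k+1}, Z)$ is non-trivial. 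By Okamoto's result \citep{Okamoto1973} the finite union of their zero sets has zero Lebesgue measure in $\mathbb{R}^d$, and since the nd parameter space of $\mathcal{N}(H)$ has positive Lebesgue measure by Lemma~\ref{lem:spacecg}, a single $p \in \mathcal{N}(H)$ exists satisfying $w_k \nci_p w_{k+1} | Z$ for all $k$.

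The final step is an induction on $k$ showing $i \nci_p w_k | Z$, which at $k = 2r+1$ gives $i \nci_p j | Z$. The geometric observation is that every $w_k$ with $k \ge 1$ lies in a non-collider section of $\rho$: the endpoints $i$ and $j$ cannot sit inside a collider section (no arrow on the endpoint side), and each tail $a_l$ or $b_l$ lies in a section whose adjacent edge to $\sigma_l^*$ points outward. Since $\rho$ is the unique route in $H$ between any two of its nodes, placing $w_{k-1}$ into the conditioning set puts a conditioning node inside a non-collider section of $\rho$ and destroys superactivity, yielding $i \ci_H w_k | Z \cup \{w_{k-1}\}$ and hence $i \ci_p w_k | Z \cup \{w_{k-1}\}$ by Markovianity. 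Applying the contrapositive of weak transitivity with $A = i$, $B = w_k$, $K = Z$, $u = w_{k-1}$ to the inductive hypothesis $i \nci_p w_{k-1} | Z$ and the chain dependency $w_{k-1} \nci_p w_k | Z$ forces $i \nci_p w_k | Z$ or $i \nci_p w_k | Z \cup \{w_{k-1}\}$, and the second disjunct is ruled out by the Markov consequence. The main obstacle I anticipate is the geometric bookkeeping: verifying that loop-pruning preserves superactivity, that each $w_k$ ($k \ge 1$) really sits in a non-collider section of $\rho$ and of each subroute involved, and that the subroutes for chunk1-application really contain no node in $Z$. These are careful checks against the precise definitions of section and collider section rather than conceptual difficulties; the probabilistic backbone (polynomial non-triviality via Okamoto, then iterated weak transitivity using Markov as the escape from the unwanted disjunct) then proceeds mechanically.
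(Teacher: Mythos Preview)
Your approach has a genuine gap at the very first step: the reduction to a simple superactive path is not possible in general for LWF chain graphs. Consider the CG on $\{i,j,a,b,c\}$ with edges $i\to a$, $j\to a$, $a-b$, $c\to b$, and take $Z=\{b\}$. The only path in $G$ from $i$ to $j$ is $i\to a\leftarrow j$, whose sole collider section $\{a\}$ misses $Z$, so it is not superactive. Yet the \emph{route} $i\to a-b-a\leftarrow j$ is superactive with respect to $\{b\}$ (its unique collider section $a-b-a$ contains $b$), and the moralization criterion confirms $i\nci_G j\mid\{b\}$. So the ``prune repeats'' step you defer as routine bookkeeping is in fact impossible here: the repeated visit to $a$ is essential to pick up $b$ in the collider section, and no pruning preserves superactivity.

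This breaks the rest of your argument, because the separation $i\ci_H w_k\mid Z\cup\{w_{k-1}\}$ that you need for weak transitivity relies on $w_{k-1}$ being a cut vertex of the moral graph of $H$, which in turn relies on $H$ being a simple path. Once $\rho$ is a genuine route, $H=G_\rho$ can have several parents feeding the same chain component, moralization creates edges that jump over your intended cut node, and the blocking claim fails. The paper's proof is organized precisely to avoid this trap: it never reduces to paths, it selects the splitting route $\rho$ to be \emph{dshortest} (fewest distinct edges) among superactive routes, and it chooses the split node $k$ to have \emph{minimal level} in $G_\rho$ among nodes of $\rho$ outside $Zij$. Those two choices are exactly what force the delicate separation $i\ci_{G_\rho} j\mid Zk$ to hold (and what force the base cases to fall under Lemma~\ref{lem:chunk1} or~\ref{lem:chunk2}); the argument then recurses on the two halves rather than walking linearly along collider sections. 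Your measure-theoretic backbone (Lemma~\ref{lem:polynomial}, Okamoto, then weak transitivity with a Markov escape) is the same as the paper's, but the combinatorial scaffolding you propose does not support it.
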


\begin{proof}
We prove the lemma in two steps. In the first step, we introduce some notation that we use in the second step, the actual proof of the lemma.

\textbf{Step 1} Given a route $\rho$ in a CG $H$, we define $H_{\rho}$ as the CG resulting from removing from $H$ all the edges that are not in $\rho$. We define the level of a node in $H$ as the index of the connectivity component the node belongs to. We define the dlength of a route as the number of distinct edges in the route. Note the difference between the dlength and the length of a route: The former counts edges without repetition and the latter with repetition (recall Section \ref{sec:preliminaries}). We say that a route is dshorter than another route if the former has smaller dlength than the latter. Likewise, we say that a route is dshortest if no other route is dshorter than it. Let $\mathfrak{N}$ denote any total order of the nodes in the CG $H$. Let $\mathfrak{R}$ denote any total order of all the routes between two nodes in $H$. Finally, if $a \nci_H b | C$ where $a, b \in V$ and $C \subseteq V \setminus ab$, then we define $splits(a,b,C,H)$ as follows:
\begin{itemize}
\item[S1.] If there is a route in $H$ like that in Lemma \ref{lem:chunk1} or \ref{lem:chunk2} for $i=a$, $j=b$ and $Z=C$, then we define $splits(a,b,C,H)=0$.
\item[S2.] Otherwise, we define recursively $splits(a,b,C,H)=splits(a,k,C,H_{\rho})+splits(b,k,C,H_{\rho})+1$, where $\rho$ and $k$ are selected as follows. Let $\Psi$ denote the set of routes between $a$ and $b$ in $H$ that are superactive with respect to $C$. Let $\Phi$ denote the dshortest routes in $\Psi$. Let $\Upsilon$ denote the shortest routes in $\Phi$. Let $\rho$ denote the route in $\Upsilon$ that comes first in $\mathfrak{R}$. We call $\rho$ the splitting route. Furthermore, let $K$ denote the set of nodes in $\rho$ but not in $C a b$ that have minimal level in $H_{\rho}$. Let $k$ denote the node in $K$ that comes first in $\mathfrak{N}$. Note that the only point with $\mathfrak{R}$ and $\mathfrak{N}$ is to select $\rho$ and $k$ unambiguously.
\end{itemize}

Note that we have implicitly assumed in the definition S2 that $K$ is non-empty. We now prove that this is always true. Assume to the contrary that $K$ is empty. This means that all the nodes in $\rho$ are in $C a b$. Since the definition S1 did not apply, $\rho$ must have some collider section $\nu$. Moreover, $a=v_1 \rightarrow \nu \leftarrow v_l=b$ is a subroute of $\rho$: If $v_1 \neq a, b$ (resp. $v_l \neq a, b$) then $v_1$ (resp. $v_l$) must be outside $C$ for $\rho$ to be superactive with respect to $C$, which contradicts the assumption that all the nodes in $\rho$ are in $C a b$. Moreover, some node in $\nu$ must be in $C$ for $\rho$ to be superactive with respect to $C$. However, this implies that $a \rightarrow \nu \leftarrow b$ is a route that satisfies the requirements of the definition S1, which is a contradiction.

Finally, we prove that $splits(a,k,C,H_{\rho})$ and $splits(b,k,C,H_{\rho})$ in the definition S2 are well-defined. Let $\varrho$ denote the subroute of $\rho$ between the first occurrences of $a$ and $k$ in $\rho$ when going from $a$ to $b$. Note that if $\rho$ contains $k$ only in non-collider sections, then none of the other nodes in those sections can be in $C$ for $\rho$ to be superactive with respect to $C$ and, thus, $\varrho$ is a route between $a$ and $k$ in $H_{\rho}$ that is superactive with respect to $C$ and, thus, $a \nci_{H_{\rho}} k | C$ and, thus, $splits(a,k,C,H_{\rho})$ is defined. We now prove that $\rho$ contains $k$ only in non-collider sections. Assume the contrary and let $\nu$ denote any collider section of $\rho$ that contains $k$. Note that $a=v_1 \rightarrow \nu \leftarrow v_l=b$ is a subroute of $\rho$, because if $v_1 \neq a, b$ or $v_l \neq a, b$ then there exists a node in $\rho$ but not in $C a b$ with smaller level than $k$ in $H_{\rho}$, which is a contradiction. Moreover, some node in $\nu$ must be in $C$ for $\rho$ to be superactive with respect to $C$. However, this implies that $a \rightarrow \nu \leftarrow b$ is a route that satisfies the requirements of the definition S1, which is a contradiction. Now, let $\varphi$ denote the subroute of $\rho$ between the first occurrences of $b$ and $k$ in $\rho$ when going from $b$ to $a$. By repeating the reasoning above with $\varphi$ instead of $\varrho$, we can conclude that $b \nci_{H_{\rho}} k | C$ and, thus, that $splits(b,k,C,H_{\rho})$ is defined too. Moreover, note that $\varrho$ and $\varphi$ have dlength equal or smaller than $\rho$ and length strictly smaller than $\rho$. Therefore, the splitting routes for $splits(a,k,C,H_{\rho})$ and $splits(b,k,C,H_{\rho})$ are each either dshorter or shorter than $\rho$. This guarantees that the recursive definition S2 eventually reaches the trivial case S1.

\textbf{Step 2} We prove the lemma by induction over the value of $splits(i,j,Z,G)$. If $splits(i,j,Z,G)=0$, then there exists a route in $G$ like that in Lemma \ref{lem:chunk1} or \ref{lem:chunk2}. Therefore, there exists a probability distribution $p \in \mathcal{N}(G)$ such that $i \nci_p j | Z$ by Lemma \ref{lem:chunk1} or \ref{lem:chunk2}.

Assume as induction hypothesis that the lemma holds for any value of $splits(i,j,Z,G)$ smaller than $m$ ($m > 0$). We now prove it for value $m$. Recall that $splits(i,j,Z,G)=splits(i,k,Z,G_{\rho})+splits(j,k,Z,G_{\rho})+1$ where $\rho$ is a dshortest route among all the routes between $i$ and $j$ in $G$ that are superactive with respect to $Z$, and $k$ is a node in $\rho$ but not in $Z i j$ that has minimal level in $G_{\rho}$. Then, as shown in Step 1, $i \nci_{G_{\rho}} k | Z$ and $j \nci_{G_{\rho}} k | Z$. Moreover, $splits(i,k,Z,G_{\rho})$ and $splits(j,k,Z,G_{\rho})$ are both smaller than $m$. Then, by the induction hypothesis, there exist two probability distributions $r, s \in \mathcal{N}(G_{\rho})$ such that $i \nci_r k | Z$ and $j \nci_s k | Z$. We prove below that there exists a probability distribution $p \in \mathcal{N}(G_{\rho})$ such that $i \nci_p j | Z$. Note that $p \in \mathcal{N}(G)$ because $\mathcal{N}(G_{\rho}) \subseteq \mathcal{N}(G)$ by Lemma \ref{lem:subset}.

By Lemma \ref{lem:polynomial}, there exists a real polynomial $S(i,k,Z)$ in the nd parameters in the parameterization of the probability distributions in $\mathcal{N}(G_{\rho})$ such that, for every $q \in \mathcal{N}(G_{\rho})$, $i \ci_q k | Z$ iff $S(i,k,Z)$ vanishes for the nd parameter values coding $q$. Furthermore, $S(i,k,Z)$ is non-trivial due to the probability distribution $r$ above. Similarly, there exists a real polynomial $S(j,k,Z)$ in the nd parameters in the parameterization of the probability distributions in $\mathcal{N}(G_{\rho})$ such that, for every $q \in \mathcal{N}(G_{\rho})$, $j \ci_q k | Z$ iff $S(j,k,Z)$ vanishes for the nd parameter values coding $q$. Furthermore, $S(j,k,Z)$ is also non-trivial due to the probability distribution $s$ above. Let $sol(i,k,Z)$ and $sol(j,k,Z)$ denote the sets of solutions to the polynomials $S(i,k,Z)$ and $S(j,k,Z)$, respectively. Let $d$ denote the dimension of $G_{\rho}$. Then, $sol(i,k,Z)$ and $sol(j,k,Z)$ have both zero Lebesgue measure with respect to $\mathbb{R}^d$ because they consist of the solutions to non-trivial real polynomials in real variables (the nd parameters) \citep{Okamoto1973}. Then, $sol=sol(i,k,Z) \cup sol(j,k,Z)$ also has zero Lebesgue measure with respect to $\mathbb{R}^d$, because the finite union of sets of zero Lebesgue measure has zero Lebesgue measure too. Consequently, the probability distributions $q \in \mathcal{N}(G_{\rho})$ such that $i \ci_q k | Z$ or $j \ci_q k | Z$ correspond to a set of elements of the nd parameter space for $\mathcal{N}(G_{\rho})$ that has zero Lebesgue measure with respect to $\mathbb{R}^d$ because it is contained in $sol$. Since this correspondence is one-to-one by Lemma \ref{lem:121cg}, the probability distributions $q \in \mathcal{N}(G_{\rho})$ such that $i \ci_q k | Z$ or $j \ci_q k | Z$ also have zero Lebesgue measure with respect to $\mathbb{R}^d$. This result together with Theorem \ref{the:f1} imply that there exists a probability distribution $p \in \mathcal{N}(G_{\rho})$ such that $i \nci_p k | Z$ and $j \nci_p k | Z$. Note that these two independence statements together with $i \ci_p j | Z k$ would imply the desired result by symmetry and weak transitivity. We prove below $i \ci_{G_{\rho}} j | Z k$ which, in turn, implies $i \ci_p j | Z k$ because $p$ is Markovian with respect to $G_{\rho}$, since $p \in \mathcal{N}(G_{\rho})$ \citep[Proposition 3.30, Theorems 3.34 and 3.36]{Lauritzen1996}.

Assume to the contrary $i \nci_{G_{\rho}} j | Z k$. Let $\varrho$ denote any route between $i$ and $j$ in $G_{\rho}$ that is superactive with respect to $Z k$. Note that $\varrho$ must contain $k$ because, otherwise, $\varrho$ would be a route between $i$ and $j$ in $G$ that is superactive with respect to $Z$ and that is dshorter than $\rho$, which is a contradiction. Furthermore, $\varrho$ must contain $k$ only in collider sections because, otherwise, $\varrho$ would not be superactive with respect to $Z k$. Let $\nu$ denote any collider section of $\varrho$ that contains $k$. Note that $i=v_1 \rightarrow \nu \leftarrow v_l=j$ is a subroute of $\varrho$, because if $v_1 \neq i, j$ or $v_l \neq i, j$ then there exists a node in $\varrho$ but not in $Z i j$ with smaller level than $k$ in $G_{\rho}$. Since $\varrho$ is a route in $G_{\rho}$, this implies that there exists a node in $\rho$ but not in $Z i j$ with smaller level than $k$ in $G_{\rho}$, which is a contradiction. Note also that no descendant of $k$ in $G$ can be in $Z$ because, otherwise, $i \rightarrow \nu \leftarrow j$ would be a route that satisfies the requirements of the definition S1, which is a contradiction. However, if no descendant of $k$ in $G$ is in $Z$, then $\rho$ must contain $k$ only in non-collider sections because, otherwise, $\rho$ would not be superactive with respect to $Z$. The last two observations imply that $i$ or $j$ is a descendant of $k$ in $G$ which, together with $i \rightarrow \nu \leftarrow j$, implies that $G$ has a directed pseudocycle between $i$ and itself or between $j$ and itself, because $\nu$ contains $k$. This is a contradiction.
\end{proof}

\begin{theorem}\label{the:f2}
Let $G$ be a CG of dimension $d$. The set of probability distributions in $\mathcal{N}(G)$ that are not faithful to $G$ has zero Lebesgue measure with respect to $\mathbb{R}^d$.
\end{theorem}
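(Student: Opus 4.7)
The plan is to express the set of non-faithful distributions in $\mathcal{N}(G)$ as a finite union of ``polynomial constraint sets'' and then show each such set has zero Lebesgue measure via Lemmas \ref{lem:polynomial}, \ref{lem:non-trivial}, and Okamoto's theorem on zeros of non-trivial real polynomials.

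First I would reduce non-faithfulness to pairwise violations. Since every $p \in \mathcal{N}(G)$ is Markovian with respect to $G$, $p$ fails to be faithful iff there exist disjoint $I, J, K \subseteq V$ with $I \ci_p J | K$ but $I \nci_G J | K$. Because $I \nci_G J | K$ means there is a route superactive with respect to $K$ between \emph{some} $i \in I$ and \emph{some} $j \in J$, we have $i \nci_G j | K$ for that pair; by decomposition on $I \ci_p J | K$ we get $i \ci_p j | K$. Hence $p$ is not faithful to $G$ iff there exist $i, j \in V$ and $Z \subseteq V \setminus ij$ with $i \nci_G j | Z$ and $i \ci_p j | Z$, so the non-faithful distributions in $\mathcal{N}(G)$ are
\[
\bigcup_{(i,j,Z)\,:\, i \nci_G j | Z} \{p \in \mathcal{N}(G) \,:\, i \ci_p j | Z\},
\]
a union over only finitely many triples.

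Next I would handle each term in the union. By Lemma \ref{lem:polynomial}, for each triple $(i,j,Z)$ there is a real polynomial $S(i,j,Z)$ in the nd parameters whose vanishing is equivalent to $i \ci_p j | Z$. Since $i \nci_G j | Z$, Lemma \ref{lem:non-trivial} supplies a probability distribution $p' \in \mathcal{N}(G)$ with $i \nci_{p'} j | Z$, so $S(i,j,Z)$ does not vanish identically on the nd parameter space for $\mathcal{N}(G)$; as remarked after Lemma \ref{lem:polynomial}, this implies $S(i,j,Z)$ is non-trivial as a polynomial on $\mathbb{R}^d$. By Okamoto's theorem (as used in Lemma \ref{lem:chunk2}), the zero set of a non-trivial real polynomial in $d$ real variables has zero Lebesgue measure with respect to $\mathbb{R}^d$.

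Finally I would translate back via the one-to-one correspondence in Lemma \ref{lem:121cg}: each set $\{p \in \mathcal{N}(G) : i \ci_p j | Z\}$ corresponds to a subset of the zero set of $S(i,j,Z)$ in the nd parameter space, hence has zero Lebesgue measure with respect to $\mathbb{R}^d$. A finite union of zero-measure sets is zero-measure, which yields the theorem. The main conceptual step, namely establishing non-triviality of the relevant polynomials, has already been done in Lemma \ref{lem:non-trivial}; this proof is essentially a packaging of those ingredients, so the only real obstacle was the construction carried out there, not anything in this final argument.
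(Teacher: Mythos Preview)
Your proposal is correct and follows essentially the same argument as the paper: reduce non-faithfulness to pairwise violations $i \ci_p j \mid Z$ with $i \nci_G j \mid Z$, invoke Lemma~\ref{lem:polynomial} for the polynomial characterization, Lemma~\ref{lem:non-trivial} for non-triviality, Okamoto for zero measure of each zero set, take the finite union, and transfer back via Lemma~\ref{lem:121cg}. The only cosmetic difference is that the paper also cites symmetry (alongside decomposition) when passing from $I \ci_p J \mid K$ to $i \ci_p j \mid K$, and phrases the non-faithful set as \emph{contained in} the union rather than equal to it, but neither affects the substance.
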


\begin{proof}
Note that the probability distributions in $\mathcal{N}(G)$ are Markovian with respect to $G$ \citep[Proposition 3.30, Theorems 3.34 and 3.36]{Lauritzen1996}. Then, for any probability distribution $p \in \mathcal{N}(G)$ not to be faithful to $G$, $p$ must satisfy some independence that is not entailed by $G$. That is, there must exist three disjoint subsets of $V$, here denoted as $I$, $J$ and $Z$, such that $I \nci_G J | Z$ but $I \ci_p J | Z$. However, if $I \nci_G J | Z$ then $i \nci_G j | Z$ for some $i \in I$ and $j \in J$. Furthermore, if $I \ci_p J | Z$ then $i \ci_p j | Z$ by symmetry and decomposition. By Lemma \ref{lem:polynomial}, there exists a real polynomial $S(i,j,Z)$ in the nd parameters in the parameterization of the probability distributions in $\mathcal{N}(G)$ such that, for every $q \in \mathcal{N}(G)$, $i \ci_q j | Z$ iff $S(i,j,Z)$ vanishes for the nd parameter values coding $q$. Furthermore, $S(i,j,Z)$ is non-trivial by Lemma \ref{lem:non-trivial}. Let $sol(i, j, Z)$ denote the set of solutions to the polynomial $S(i,j,Z)$. Then, $sol(i, j, Z)$ has zero Lebesgue measure with respect to $\mathbb{R}^d$ because it consists of the solutions to a non-trivial real polynomial in real variables (the nd parameters) \citep{Okamoto1973}. Then, $sol= \bigcup_{\{I, J, Z \subseteq V \: \mbox{\scriptsize disjoint} \: : \: I \nci_G J | Z\}} \bigcup_{\{i \in I, j \in J \: : \: i \nci_G j | Z\}} sol(i, j, Z)$ has zero Lebesgue measure with respect to $\mathbb{R}^d$, because the finite union of sets of zero Lebesgue measure has zero Lebesgue measure too. Consequently, the probability distributions in $\mathcal{N}(G)$ that are not faithful to $G$ correspond to a set of elements of the nd parameter space for $\mathcal{N}(G)$ that has zero Lebesgue measure with respect to $\mathbb{R}^d$ because it is contained in $sol$. Since this correspondence is one-to-one by Lemma \ref{lem:121cg}, the probability distributions in $\mathcal{N}(G)$ that are not faithful to $G$ also have zero Lebesgue measure with respect to $\mathbb{R}^d$.
\end{proof}

The following corollary, which follows trivially from Theorems \ref{the:f1} and \ref{the:f2}, summarizes the results in this section.

\begin{corollary}\label{co:f3}
Let $G$ be a CG of dimension $d$. The set of probability distributions in $\mathcal{N}(G)$ that are faithful to $G$ has positive Lebesgue measure with respect to $\mathbb{R}^d$.
\end{corollary}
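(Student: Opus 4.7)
The plan is to derive the corollary as an immediate consequence of Theorems \ref{the:f1} and \ref{the:f2}. Write $\mathcal{N}(G) = \mathcal{F} \sqcup \mathcal{U}$, where $\mathcal{F}$ is the set of probability distributions in $\mathcal{N}(G)$ that are faithful to $G$ and $\mathcal{U}$ is the set of probability distributions in $\mathcal{N}(G)$ that are not faithful to $G$. Via the one-to-one correspondence established in Lemma \ref{lem:121cg}, each of these three sets corresponds to a subset of the nd parameter space for $\mathcal{N}(G)$, and we can speak of their Lebesgue measures with respect to $\mathbb{R}^d$. By finite additivity, the measure of the image of $\mathcal{F}$ equals the measure of the image of $\mathcal{N}(G)$ minus the measure of the image of $\mathcal{U}$.

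First I would invoke Theorem \ref{the:f1} to conclude that $\mathcal{N}(G)$ corresponds to a subset of $\mathbb{R}^d$ of strictly positive Lebesgue measure. Next I would invoke Theorem \ref{the:f2} to conclude that $\mathcal{U}$ corresponds to a subset of $\mathbb{R}^d$ of zero Lebesgue measure. Subtracting a zero-measure set from a set of strictly positive measure leaves a set of strictly positive measure, so $\mathcal{F}$ itself corresponds to a subset of the nd parameter space of strictly positive Lebesgue measure. Transporting this conclusion back through the bijection of Lemma \ref{lem:121cg} yields the claim.

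There is essentially no obstacle here: the genuine content is carried entirely by the two preceding theorems and by the measure-preserving bookkeeping furnished by the bijection of Lemma \ref{lem:121cg}. The only point that deserves an explicit mention in the write-up is that the Lebesgue measure of a subset of $\mathcal{N}(G)$ is, by definition in this paper, the Lebesgue measure of the corresponding subset of the nd parameter space, which makes the subtraction argument legitimate.
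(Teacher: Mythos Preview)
Your proposal is correct and matches the paper's approach exactly: the paper itself gives no explicit proof, stating only that the corollary ``follows trivially from Theorems \ref{the:f1} and \ref{the:f2},'' and your write-up is simply a careful spelling-out of that triviality via the decomposition $\mathcal{N}(G)=\mathcal{F}\sqcup\mathcal{U}$ and subtraction of measures.
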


\section{Equivalence in chain graphs}\label{sec:equivalence}

The space of CGs can be divided in classes of equivalent CGs according to criteria such as Markov independence equivalence, Markovian distribution equivalence or factorization equivalence. As we prove below with the help of the theorems above, these criteria actually coincide. This result is important because the classes of Markovian distribution equivalent CGs have a simple graphical characterization and a natural representative, the so-called largest CG, which now also apply to the classes of equivalence induced by the other two criteria mentioned. We also prove below that all equivalent CGs have the same dimension with respect to the parameterization introduced in Section \ref{sec:faithfulness}.

Before proving our results, we formally define the equivalence criteria discussed in the paragraph above. Recall that, unless otherwise stated, all the probability distributions in this paper are defined on (state space) $\mathbb{R}^N$, where $|V|=N$. We say that two CGs are Markov independence equivalent if they represent the same independence model. We say that two CGs are Markovian distribution equivalent if every regular Gaussian distribution is Markovian with respect to both CGs or with respect to neither of them. We say that two CGs $G$ and $H$ are factorization equivalent if $\mathcal{N}(G)=\mathcal{N}(H)$. The corollary below proves that these definitions coincide.

\begin{corollary}\label{co:me}
Let $G$ and $H$ denote two CGs. The following statements are equivalent in the frame of regular Gaussian distributions:
\begin{enumerate}
\item $G$ and $H$ are factorization equivalent.
\item $G$ and $H$ are Markovian distribution equivalent.
\item $G$ and $H$ are Markov independence equivalent.
\end{enumerate}
\end{corollary}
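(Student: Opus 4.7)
The plan is to establish the cycle $3 \Rightarrow 2 \Rightarrow 1 \Rightarrow 3$. Two of these directions are essentially definitional, and the third is where the faithfulness machinery developed in Section \ref{sec:faithfulness} pays off.

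For $3 \Rightarrow 2$: if $G$ and $H$ represent the same independence model, then for any regular Gaussian $p$ the statement ``$p$ is Markovian with respect to $G$'' and ``$p$ is Markovian with respect to $H$'' unfold to the same collection of conditional independence requirements on $p$, hence they are equivalent. For $2 \Rightarrow 1$: I would invoke the standard fact, already used repeatedly in the paper \citep[Proposition 3.30, Theorems 3.34 and 3.36]{Lauritzen1996}, that a regular Gaussian distribution factorizes with respect to a CG iff it is Markovian with respect to it. Thus $\mathcal{N}(G)$ coincides with the set of regular Gaussians that are Markovian with respect to $G$, and similarly for $H$; assumption $2$ says these two sets agree, giving $\mathcal{N}(G)=\mathcal{N}(H)$.

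For the remaining direction $1 \Rightarrow 3$, which I expect to be the main obstacle, I would argue by contraposition and use Lemma \ref{lem:non-trivial}. Suppose $G$ and $H$ induce different independence models. Without loss of generality there are disjoint $I,J,K \subseteq V$ with $I \ci_G J \mid K$ but $I \nci_H J \mid K$. Since separation in a CG satisfies decomposition and symmetry, the latter gives some $i \in I$ and $j \in J$ with $i \nci_H j \mid K$. Lemma \ref{lem:non-trivial} then produces a probability distribution $p \in \mathcal{N}(H)$ such that $i \nci_p j \mid K$. Assuming factorization equivalence, $p \in \mathcal{N}(G)$ as well, so $p$ is Markovian with respect to $G$ by the factorization-equals-Markov fact cited above, yielding $I \ci_p J \mid K$ and hence (by decomposition) $i \ci_p j \mid K$, a contradiction.

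The only subtlety worth double checking is the reduction from the set-level independence statement $I \nci_H J \mid K$ to a pairwise one $i \nci_H j \mid K$; this is exactly the contrapositive of decomposition for the graphical separation $\ci_H$, which holds because CG separation is a semi-graphoid. Once this is in place the argument is short and uses Lemma \ref{lem:non-trivial} essentially as a black box — the heavy lifting (existence of a witness distribution in $\mathcal{N}(H)$ realizing any graphical non-separation) is already done there, and Corollary \ref{co:me} is a clean application.
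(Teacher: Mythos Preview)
Your proof is correct. The cycle $3 \Rightarrow 2 \Rightarrow 1 \Rightarrow 3$ works, and the only substantive step, $1 \Rightarrow 3$, is handled properly: the reduction from $I \nci_H J \mid K$ to $i \nci_H j \mid K$ is just the existence of a superactive route from some $i\in I$ to some $j\in J$, and Lemma~\ref{lem:non-trivial} then supplies the witness $p$.

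The paper organizes the argument slightly differently --- it proves $1 \Leftrightarrow 2$ via \citep[Proposition 3.30, Theorems 3.34 and 3.36]{Lauritzen1996} and then $2 \Leftrightarrow 3$ --- and for the nontrivial direction it invokes Corollary~\ref{co:f3} (existence of a regular Gaussian \emph{faithful} to $H$) rather than Lemma~\ref{lem:non-trivial}. A faithful $p$ automatically satisfies $I \nci_p J \mid K$, so no pairwise reduction is needed. Your route is marginally more economical: Lemma~\ref{lem:non-trivial} is a strictly weaker input than Corollary~\ref{co:f3} (one witness for one non-separation, rather than a single distribution witnessing all of them), and it is exactly the technical core from which Corollary~\ref{co:f3} is derived anyway. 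The paper's version is a touch cleaner to state because it avoids the decomposition step, but both arguments are short and essentially equivalent in content.
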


\begin{proof}
The equivalence of Statements 1 and 2 follows from \citep[Proposition 3.30, Theorems 3.34 and 3.36]{Lauritzen1996}. We now prove that Statements 2 and 3 are equivalent. By definition, Markov independence equivalence implies Markovian distribution equivalence. To see the opposite implication, note that if $G$ and $H$ are not Markov independence equivalent, then one of them, say $G$, must represent a separation statement $I \ci_G J | K$ that is not represented by $H$. Consider a probability distribution $p \in \mathcal{N}(H)$ faithful to $H$. Such a probability distribution exists due to Corollary \ref{co:f3}, and it is Markovian with respect to $H$. However, $p$ cannot be Markovian with respect to $G$, because $I \nci_H J | K$ implies $I \nci_p J | K$.
\end{proof}

\citet[Theorem 5.6]{Frydenberg1990} gives a straightforward graphical characterization of Markovian distribution equivalence: Two CGs are Markovian distribution equivalent iff they have the same underlying UG and the same complexes. Due to the corollary above, that is also a graphical characterization of the other two types of equivalence discussed there. Hereinafter, we do not distinguish anymore between the different types of equivalence discussed in the corollary above because they coincide and, thus, we simply refer to them as equivalence.

\citet[Proposition 5.7]{Frydenberg1990} shows that every class of equivalent CGs contains a unique CG that has more undirected edges than any other CG in the class. Such a CG is called the largest CG (LCG) in the class, and it is usually considered a natural representative of the class. \citet[Section 4.2]{Studeny1998} conjectures that, for discrete probability distributions, the LCG in a class of equivalent CGs has fewer nd parameters than any other CG in the class. This would imply that the most space efficient way of storing the discrete probability distributions that factorize with respect to a class of equivalent CGs is by factorizing them with respect to the LCG in the class rather than with respect to any other CG in the class. The corollary below proves that an analogous conjecture for regular Gaussian distributions and the parameterization of them proposed in Section \ref{sec:faithfulness} would be false.

\begin{corollary}\label{co:lcg}
All equivalent CGs have the same dimension with respect to the parameterization proposed in Section \ref{sec:faithfulness}.
\end{corollary}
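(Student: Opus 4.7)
The plan is to reduce the statement to a simple edge-counting argument via Frydenberg's graphical characterization of equivalence. Recall from Section \ref{sec:cg} that the dimension of a CG $G$ was defined as $d = 2|V| + |G|$, where $|V|$ is fixed (all CGs are defined over the same node set $V$) and $|G|$ denotes the total number of edges (undirected and directed) in $G$. So the corollary reduces to showing that equivalent CGs have the same number of edges.

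First I would invoke Corollary \ref{co:me}, which collapses the three notions of equivalence into one, together with \citet[Theorem 5.6]{Frydenberg1990}, cited immediately after that corollary, which states that two CGs are equivalent iff they have the same underlying UG and the same complexes. The key observation is then that the underlying UG of a CG $G$ is obtained by replacing each directed edge of $G$ by an undirected one, so it has exactly $|G|$ edges. Hence if two CGs $G$ and $H$ are equivalent, they share the same underlying UG and in particular $|G| = |H|$.

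Combining, for any two equivalent CGs $G$ and $H$ we have $2|V| + |G| = 2|V| + |H|$, which is the desired statement about dimensions.

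There is no real obstacle here: the work was already done in Corollary \ref{co:me} (which lets us use Frydenberg's characterization in the Gaussian setting) and in the definition of dimension given in Section \ref{sec:cg}. The only thing to be careful about is that $|G|$ counts edges rather than, say, adjacencies weighted by type: since each undirected edge in $G$ becomes one undirected edge in the underlying UG and each directed edge in $G$ also becomes one undirected edge there, the total edge count is preserved exactly, which is precisely what the parameterization in Section \ref{sec:cg} uses (one nd parameter per edge of $G$, regardless of whether it is directed or undirected).
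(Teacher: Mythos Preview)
Your argument is correct, and it takes a slightly different route from the paper's own proof. You appeal directly to Frydenberg's characterization (same underlying UG and same complexes) and observe that sharing the same underlying UG immediately forces $|G|=|H|$. The paper instead passes through the largest chain graph: it lets $G$ be the LCG of the equivalence class, takes any other member $H$, and invokes \citet[Theorem 3.9]{VolfandStudeny1999} to say that $H$ is obtained from $G$ by orienting some undirected edges, whence $|G|=|H|$. Your route is the more economical one here, since Frydenberg's theorem is already cited in the paragraph preceding the corollary and no additional reference is needed; the paper's route, on the other hand, ties the result explicitly to the LCG, which fits the narrative of that section (the corollary is framed as a comment on Studen\'y's conjecture about the LCG). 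Either way, the substantive content is the same one-line edge count $2|V|+|G|=2|V|+|H|$.
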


\begin{proof}
Let $G$ denote the LCG in a class of equivalent CGs. Let $H$ denote any other CG in the class. Recall that the dimensions of $G$ and $H$ with respect to the parameterization proposed in Section \ref{sec:faithfulness} are, respectively, $2|V|+|G|$ and $2|V|+|H|$. Note that $H$ can be obtained from $G$ by orienting some of the undirected edges in $G$ \citep[Theorem 3.9]{VolfandStudeny1999}. Then, $|H|=|G|$ and, thus, $2|V|+|G|=2|V|+|H|$.
\end{proof}

\section{Conclusions}\label{sec:conclusions}

In this paper, we have proven that, in certain measure-theoretic sense, almost all the regular Gaussian distributions that factorize with respect to a chain graph are faithful to it. This result extends previous results such as
\begin{itemize}
\item \citep[Theorem 3.2]{Spirtesetal.1993} where it is proven that, in certain measure-theoretic sense, almost all the regular Gaussian distributions that factorize with respect to an acyclic directed graph are faithful to it, and
\item \citep[Corollary 3]{LnenickaandMatus2007} where it is proven that for any undirected graph there exists a regular Gaussian distribution that is faithful to it.
\end{itemize}

There are a number of consequences that follow from the result proven in this paper:
\begin{itemize}
\item There are independence models that can be represented exactly by chain graphs but that cannot be represented exactly by undirected graphs or acyclic directed graphs. As a matter of fact, the experimental results in \citep{Penna2007} suggest that this may be the case for the vast majority of independence models that can be represented exactly by chain graphs. This is an advantage of chain graphs when dealing with regular Gaussian distributions, because there exists a regular Gaussian distribution that is faithful to each of these independence models.
\item The moralization and c-separation criteria for reading independencies holding in the regular Gaussian distributions that factorize with respect to a chain graph are complete (i.e. they identify all the independencies that can be identified on the sole basis of the chain graph), because there exists a regular Gaussian distribution that is faithful to the chain graph.
\item Some definitions of equivalence in chain graphs coincide, which implies that the graphical characterization of Markovian distribution equivalence in \citep[Theorem 5.6]{Frydenberg1990} also applies to other definitions of equivalence.
\item For the parameterization introduced in this paper, all the chain graphs in a class of equivalence have the same dimension and, thus, their factorizations are equally space efficient for storing the regular Gaussian distribution that factorize with respect to the chain graphs in the class.
\end{itemize}

\section*{Acknowledgements}

This work is funded by the Swedish Research Council (ref. VR-621-2005-4202) and CENIIT at Link\"oping University (ref. 09.01). We are thankful to Prof. Jose A. Lozano for discussions on this work.

\section*{Appendix A}

In this appendix, we derive Equations \ref{eq:bishop1a}-\ref{eq:bishop2c}. Our derivations are adaptations of those in \citep[Sections 2.3.1, 2.3.3]{Bishop2006} to the notation used in this paper. A Gaussian distribution for $X$ can be written as
\[
p(x) = \mathcal{N}(\mu,\Omega^{-1})=\frac{e^{- \frac{1}{2}(x-\mu)^T \Omega (x-\mu)}}{k}
\]
where $\mu$ is a $|V|$-dimensional mean vector, $\Omega^{-1}$ a $|V| \times |V|$-dimensional covariance matrix, and $k$ a normalization constant. Note that
\[
- \frac{1}{2}(x-\mu)^T \Omega (x-\mu)
= - \frac{1}{2} [ x^T \Omega x - x^T \Omega \mu - \mu^T \Omega x + \mu^T \Omega \mu ]
\]
\[
= - \frac{1}{2} x^T \Omega x + x^T \Omega \mu + k'
\]
where $k'$ is a constant, i.e. it is independent of $x$. In the last equality above we have used the fact that
\begin{equation}\label{eq:ap1}
x^T \Omega \mu = (x^T \Omega \mu )^T = \mu^T ( x^T \Omega )^T = \mu^T (\Omega)^T (x^T)^T = \mu^T \Omega x
\end{equation}
because $\Omega$ is symmetric. Then, a Gaussian distribution for $X$ can be written as
\begin{equation}\label{eq:ap2}
p(x) = \mathcal{N}(\mu,\Omega^{-1})=\frac{e^{- \frac{1}{2} x^T \Omega x + x^T \Omega \mu}}{k''}
\end{equation}
where $k''$ is a normalization constant.

Let $I$ and $J$ denote two disjoint subsets of $V$. Let $p(x_{IJ}) = \mathcal{N}(\mu, \Omega^{-1})$ where $\Omega$ is positive definite. If we regard $x_I$ as a constant, then
\[
- \frac{1}{2}(x-\mu)^T \Omega (x-\mu)
= - \frac{1}{2} [ (x_I-\mu_I)^T \Omega_{I,I} (x_I-\mu_I) + (x_I-\mu_I)^T \Omega_{I,J} (x_J-\mu_J)
\]
\[
+ (x_J-\mu_J)^T \Omega_{J,I} (x_I-\mu_I) + (x_J-\mu_J)^T \Omega_{J,J} (x_J-\mu_J) ]
\]
\[
= - \frac{1}{2} [ x_I^T \Omega_{I,J} x_J - \mu_I^T \Omega_{I,J} x_J + x_J^T \Omega_{J,I} x_I - x_J^T \Omega_{J,I} \mu_I + x_J^T \Omega_{J,J} x_J - x_J^T \Omega_{J,J} \mu_J
\]
\[
- \mu_J^T \Omega_{J,J} x_J ] + k''' = - \frac{1}{2} x_J^T \Omega_{J,J} x_J + x_J^T [ \Omega_{J,J} \mu_J - \Omega_{J,I} (x_I-\mu_I)] + k'''
\]
where $k'''$ is a constant, i.e. it is independent of $x_J$. In the last equality above we have used a reasoning analogous to that in Equation \ref{eq:ap1}. Then,
\begin{equation}\label{eq:ap3}
p(x_J | x_I) = \frac{p(x_{IJ})}{p(x_I)} = \frac{e^{- \frac{1}{2} x_J^T \Omega_{J,J} x_J + x_J^T [ \Omega_{J,J} \mu_J - \Omega_{J,I} (x_I-\mu_I) ]}}{k''''}
\end{equation}
where $k''''$ is a normalization constant, because $x_I$ can be regarded as a constant in $p(x_J | x_I)$ since it is the value of the conditioning set. Consequently, $p(x_J | x_I)$ is a Gaussian distribution since it can be written in the form given in Equation \ref{eq:ap2}. By equating the term that is quadratic in $X$ in Equation \ref{eq:ap2} with the term that is quadratic in $X_J$ in Equation \ref{eq:ap3}, we conclude that the covariance matrix of $p(x_J | x_I)$ is $(\Omega_{J,J})^{-1}$. By equating the term that is linear in $X$ in Equation \ref{eq:ap2} with the term that is linear in $X_J$ in Equation \ref{eq:ap3}, we conclude that the mean vector of $p(x_J | x_I)$ is
\[
(\Omega_{J,J})^{-1} [\Omega_{J,J} \mu_J - \Omega_{J,I} (x_I-\mu_I)] = \mu_J - (\Omega_{J,J})^{-1} \Omega_{J,I} (x_I-\mu_I)
\]
\[
= -(\Omega_{J,J})^{-1} \Omega_{J,I} x_I +  \mu_J + (\Omega_{J,J})^{-1} \Omega_{J,I} \mu_I.
\]
Therefore, $p(x_J | x_I) = \mathcal{N}(\delta x_I + \gamma, \epsilon^{-1})$ where $\delta$, $\gamma$ and $\epsilon$ are the following real matrices of dimensions, respectively, $|J| \times |I|$, $|J| \times 1$ and $|J| \times |J|$:
\[
\delta=-(\Omega_{J,J})^{-1} \Omega_{J,I},
\]
\[
\gamma=\mu_J+(\Omega_{J,J})^{-1} \Omega_{J,I} \mu_I
\]
and
\[
\epsilon=\Omega_{J,J}.
\]

Now, let $p(x_I) = \mathcal{N}(\alpha, \beta^{-1})$ and $q(x_J | x_I) = \mathcal{N}(\delta x_I + \gamma, \epsilon^{-1})$ where $\delta$, $\gamma$ and $\epsilon$ are real matrices of dimensions, respectively, $|J| \times |I|$, $|J| \times 1$ and $|J| \times |J|$, and $\beta$ and $\epsilon$ are positive definite. Then,
\[
q(x_J | x_I)p(x_I)= \frac{e^{- \frac{1}{2} [(x_J - \delta x_I - \gamma)^T \epsilon (x_J - \delta x_I - \gamma) + (x_I - \alpha)^T \beta (x_I - \alpha)]}}{k}
\]
where $k$ is a normalization constant. Note that
\[
- \frac{1}{2} [(x_J - \delta x_I - \gamma)^T \epsilon (x_J - \delta x_I - \gamma) + (x_I - \alpha)^T \beta (x_I - \alpha)]
\]
\[
= - \frac{1}{2} [ x_J^T \epsilon x_J - x_J^T \epsilon \delta x_I - x_J^T \epsilon \gamma - (\delta x_I)^T \epsilon x_J + (\delta x_I)^T \epsilon \delta x_I + (\delta x_I)^T \epsilon \gamma
\]
\[
- \gamma^T \epsilon x_J + \gamma^T \epsilon \delta x_I + x_I^T \beta x_I - x_I^T \beta \alpha - \alpha^T \beta x_I  ] + k'
\]
\[
= - \frac{1}{2} [ x_J^T \epsilon x_J - x_J^T \epsilon \delta x_I - (\delta x_I)^T \epsilon x_J + (\delta x_I)^T \epsilon \delta x_I + x_I^T \beta x_I ]
\]
\[
+ x_J^T \epsilon \gamma + x_I^T \beta \alpha - x_I^T \delta^T \epsilon \gamma + k'
\]
where $k'$ is a constant, i.e. it is independent of $x_{IJ}$. In the last equality above we have used a reasoning analogous to that in Equation \ref{eq:ap1}. By using this reasoning further and reorganizing some terms we can rewrite the expression above as
\[
- \frac{1}{2} [ x_I^T (\beta + \delta^T \epsilon \delta) x_I + x_J^T \epsilon x_J  - x_J^T \epsilon \delta x_I - x_I^T \delta^T \epsilon x_J ]
\]
\[
+ x_J^T \epsilon \gamma + x_I^T (\beta \alpha - \delta^T \epsilon \gamma) + k'
\]
\[
= - \frac{1}{2}
\left(
  \begin{array}{c}
    x_I \\
    x_J\\
  \end{array}
\right)^T
\left(
  \begin{array}{c c}
    \beta + \delta^T \epsilon \delta &  -\delta^T \epsilon \\
    - \epsilon \delta & \epsilon\\
  \end{array}
\right)
\left(
  \begin{array}{c}
    x_I \\
    x_J\\
  \end{array}
\right)
+
\left(
  \begin{array}{c}
    x_I \\
    x_J\\
  \end{array}
\right)^T
\left(
  \begin{array}{c}
    \beta \alpha - \delta^T \epsilon \gamma \\
    \epsilon \gamma\\
  \end{array}
\right)
+ k'.
\]
Then, $q(x_J | x_I)p(x_I)$ can be expressed as
\begin{equation}\label{eq:ap4}
\frac{e^{
- \frac{1}{2}
\left(
  \begin{array}{c}
    x_I \\
    x_J\\
  \end{array}
\right)^T
\left(
  \begin{array}{c c}
    \beta + \delta^T \epsilon \delta &  -\delta^T \epsilon \\
    - \epsilon \delta & \epsilon\\
  \end{array}
\right)
\left(
  \begin{array}{c}
    x_I \\
    x_J\\
  \end{array}
\right)
+
\left(
  \begin{array}{c}
    x_I \\
    x_J\\
  \end{array}
\right)^T
\left(
  \begin{array}{c}
    \beta \alpha - \delta^T \epsilon \gamma \\
    \epsilon \gamma\\
  \end{array}
\right)
}}{k''}
\end{equation}
where $k''$ is a normalization constant. Consequently, $q(x_J | x_I)p(x_I)$ is a Gaussian distribution over $\left(
\begin{array}{c}
    x_I\\
    x_J\\
  \end{array}
\right)$ since it can be expressed in the form given in Equation \ref{eq:ap2}. As we did above, the precision matrix (resp. the mean vector) of $q(x_J | x_I)p(x_I)$ can easily be found by equating the term that is quadratic (resp. linear) in $X$ in Equation \ref{eq:ap2} with the term that is quadratic (resp. linear) in $\left(
\begin{array}{c}
    x_I\\
    x_J\\
  \end{array}
\right)$ in Equation \ref{eq:ap4}. Specifically, $q(x_J | x_I)p(x_I)=\mathcal{N}(\lambda, \Lambda^{-1})$ where
\[
\Lambda = \left(
  \begin{array}{cc}
    \beta + \delta^T \epsilon \delta & - \delta^T \epsilon \\
    -\epsilon \delta & \epsilon \\
  \end{array}
\right)
\]
and
\[
\lambda=\Lambda^{-1}
\left(
  \begin{array}{c}
    \beta \alpha - \delta^T \epsilon \gamma \\
    \epsilon \gamma\\
  \end{array}
\right)
=
\left(
  \begin{array}{cc}
    \beta^{-1} & \beta^{-1} \delta^T \\
    \delta \beta^{-1} & \epsilon^{-1} + \delta \beta^{-1} \delta^T \\
  \end{array}
\right)
\left(
  \begin{array}{c}
    \beta \alpha - \delta^T \epsilon \gamma \\
    \epsilon \gamma\\
  \end{array}
\right)
\]
\[
=
\left(
  \begin{array}{c}
    \alpha \\
    \delta \alpha + \gamma \\
  \end{array}
\right).
\]
We have omitted the details of the derivation of $\Lambda^{-1}$ from $\Lambda$, but it can easily be checked that $\Lambda \Lambda^{-1}$ equals the identity matrix. Note that $\Lambda$ is invertible because $\beta$ and $\epsilon$ are invertible and, thus, that $q(x_J | x_I)p(x_I)$ is regular.

\end{document}